\newtheorem{assumption}{Assumption}
\newtheorem{lemma}{Lemma}
\DeclarePairedDelimiter{\abs}{\lvert}{\rvert}
\DeclarePairedDelimiter{\norm}{\lVert}{\rVert}
\DeclarePairedDelimiter{\sq}{[}{]}
\DeclarePairedDelimiter{\br}{(}{)}
\newcommand{\Signum}{\textsc{Signum}}
\newcommand{\signSGD}{\textsc{signSGD}}
\newcommand{\SGD}{\textsc{SGD}}
\newcommand{\Adam}{\textsc{Adam}}
\newcommand{\RMSprop}{\textsc{RMSprop}}
\newcommand{\Rprop}{\textsc{Rprop}}
\newcommand{\QSGD}{\textsc{QSGD}}
\newcommand{\TernGrad}{\textsc{TernGrad}}
\newcommand{\Adagrad}{\textsc{Adagrad}}
\newcommand{\SNR}{S}
\newcommand{\Lnorm}{\|\vec{L}\|_1}
\newcommand{\R}{\mathbb{R}}
\def\E{\mathbb{E}}
\def\P{\mathbb{P}}
\def\diag{\mathrm{diag}}
\def\R{\mathbb{R}}
\begin{document}

\twocolumn[
\icmltitlerunning{\signSGD{}: Compressed Optimisation for Non-Convex Problems}
\icmltitle{\signSGD{}: Compressed Optimisation for Non-Convex Problems}




\icmlsetsymbol{equal}{*}

\begin{icmlauthorlist}
\icmlauthor{Jeremy Bernstein}{caltech,amz}
\icmlauthor{Yu-Xiang Wang}{amz,ucsb}
\icmlauthor{Kamyar Azizzadenesheli}{irvine}
\icmlauthor{Anima Anandkumar}{caltech,amz}
\end{icmlauthorlist}

\icmlaffiliation{amz}{Amazon AI}
\icmlaffiliation{caltech}{Caltech}
\icmlaffiliation{irvine}{UC Irvine}
\icmlaffiliation{ucsb}{UC Santa Barbara}

\icmlcorrespondingauthor{Jeremy Bernstein}{bernstein@caltech.edu}
\icmlcorrespondingauthor{Yu-Xiang Wang}{yuxiangw@amazon.edu}

\icmlkeywords{non-convex, stochastic, distributed, quantized, sign, optimization, deep learning}

\vskip 0.3in
]



\printAffiliationsAndNotice{}  

\begin{abstract} 
Training large neural networks requires distributing learning across multiple workers, where the cost of communicating gradients can be a significant bottleneck.  \signSGD{} alleviates this problem by  transmitting just the sign of each minibatch stochastic gradient. We prove that it can get the best of both worlds: compressed gradients \emph{and} \SGD{}-level convergence rate. The relative $\ell_1/\ell_2$ geometry of gradients, noise and curvature informs whether \signSGD{} or \SGD{} is theoretically better suited to a particular problem.
On the practical side we find that the momentum counterpart of \signSGD{} is able to match the accuracy and convergence speed of \Adam{} on deep Imagenet models. We extend our theory to the distributed setting, where the parameter server uses majority vote to aggregate gradient signs from each worker enabling 1-bit compression of worker-server communication \emph{in both directions}. Using a theorem by \citet{gauss} we prove that majority vote can achieve the same reduction in variance as full precision distributed \SGD{}. Thus,  there is great promise for sign-based optimisation schemes to achieve fast communication \emph{and} fast convergence. Code to reproduce experiments is to be found at \url{https://github.com/jxbz/signSGD}.

\end{abstract}
\section{Introduction}
\label{sec:intro}
Deep neural networks have learnt to solve numerous natural human tasks \cite{mafia,youagain}. Training these large-scale models can take days or even weeks. The learning process can be accelerated by distributing training over multiple processors---either GPUs linked within a single machine, or even multiple machines linked together. Communication between workers is typically handled using a parameter-server framework \cite{muli}, which involves repeatedly communicating the gradients of every parameter in the model. This can still be time-intensive for  large-scale neural networks. The communication cost can be reduced if gradients are   compressed before being transmitted.  In this paper, we analyse the theory of robust schemes for gradient compression.

\begin{algorithm}[t]
   \caption{\signSGD{}}
   \label{alg:signSGD}
\begin{algorithmic}
   \STATE {\bfseries Input:} learning rate $\delta$, current point $x_k$
   \STATE $\tilde{g}_k \leftarrow \mathrm{stochasticGradient}(x_k)$
   \STATE $x_{k+1} \leftarrow x_k - \delta \, \mathrm{sign}(\tilde{g}_k)$
\end{algorithmic}
\end{algorithm}

\begin{algorithm}[tb]
   \caption{\Signum{}}
   \label{alg:signum}
\begin{algorithmic}
   \STATE {\bfseries Input:} learning rate $\delta$, momentum constant $\beta \in (0,1)$, current point $x_k$, current momentum $m_k$

   \STATE $\tilde{g}_k \leftarrow \mathrm{stochasticGradient}(x_k)$   
   \STATE $m_{k+1} \leftarrow \beta m_k + (1-\beta) \tilde{g}_k$
   \STATE $x_{k+1} \leftarrow x_k - \delta \,\mathrm{sign}(m_{k+1})$
\end{algorithmic}
\end{algorithm}

\newlength\myindent
\setlength\myindent{2em}
\newcommand\bindent{%
  \begingroup
  \setlength{\itemindent}{\myindent}
  \addtolength{\algorithmicindent}{\myindent}
}
\newcommand\eindent{\endgroup}

\begin{algorithm}[tb]
   \caption{Distributed training by majority vote}
   \label{alg:majority}
\begin{algorithmic}
   \STATE {\bfseries Input:} learning rate $\delta$, current point $x_k$, \# workers $M$ each with an independent gradient estimate $\tilde{g}_m(x_k)$
   \STATE \textbf{on} server
   \bindent
   \STATE \textbf{pull} $\textrm{sign}(\tilde{g}_m)$ \textbf{from} each worker
	\STATE \textbf{push} $\textrm{sign}\sq*{\sum_{m=1}^M\textrm{sign}(\tilde{g}_m)}$ \textbf{to} each worker 
   \eindent

	\STATE \textbf{on} each worker
   \bindent
	\STATE $x_{k+1} \leftarrow x_k - \delta \, \textrm{sign}\sq*{\sum_{m=1}^M\textrm{sign}(\tilde{g}_m)}$
   \eindent

\end{algorithmic}
\end{algorithm}

An elegant form of gradient compression is just to take the sign of each coordinate of the stochastic gradient vector, which we call \signSGD{}. The algorithm is as simple as throwing away the exponent and mantissa of a 32-bit floating point number. Sign-based methods have been studied at least since the days of \Rprop{} \cite{riedmiller_direct_1993}. This algorithm inspired many popular optimisers---like \RMSprop{} \cite{tieleman_rmsprop_2012} and \Adam{} \cite{kingma_adam:_2015}. But researchers were interested in \Rprop{} and variants because of their robust and fast convergence, and not their potential for gradient compression.  

Until now there has been no rigorous theoretical explanation for the  empirical success of sign-based stochastic gradient  methods. The sign of the stochastic gradient  is a biased approximation to the true gradient, making it more challenging to analyse compared to standard \SGD{}.  In this paper, we provide extensive theoretical analysis  of  sign-based methods for non-convex optimisation under transparent assumptions. We show that \signSGD{} is especially efficient in problems with a particular $\ell_1$ geometry: when gradients are as dense or denser than stochasticity and curvature, then \signSGD{} can converge with a theoretical rate that has similar or even better dimension dependence than \SGD{}. We find empirically that \emph{both gradients and noise are dense} in deep learning problems, consistent with the observation that \signSGD{} converges at a similar rate to \SGD{} in practice.

We then analyse \signSGD{} in the distributed setting where the parameter server aggregates gradient signs of the workers by a majority vote. Thus we allow worker-server communication to be 1-bit compressed in both directions. We prove that the theoretical speedup matches that of distributed \SGD{}, under natural assumptions that are validated by experiments. 

We also extend our theoretical framework to the \Signum{} optimiser---which takes the \underline{\smash{sign}} of the moment\underline{\smash{um}}. Our theory suggests that momentum may be useful for controlling a tradeoff between bias and variance in the estimate of the stochastic gradient. On the practical side, we show that \Signum{} easily scales to large Imagenet models, and provided the learning rate and weight decay are tuned, all other hyperparameter settings---such as momentum, weight initisialiser, learning rate schedules and data augmentation---may be lifted from an \SGD{} implementation.
\section{Related Work}
\label{sec:related}
\textbf{Distributed machine learning:} From the information theoretic angle, \citet{mcmahan} study the communication limits of estimating the mean of a general quantity known about only through samples collected from $M$ workers. In contrast, we focus exclusively on communication of gradients for optimisation, which allows us to exploit the fact that we do not care about incorrectly communicating small gradients in our theory. Still our work has connections with information theory. When the parameter server aggregates gradients by majority vote, it is effectively performing maximum likelihood decoding of a repetition encoding of the true gradient sign that is supplied by the M workers.

As for existing gradient compression schemes, \citet{seide_1-bit_2014} and \citet{strom2015scalable} demonstrated empirically that 1-bit quantisation can still give good performance whilst dramatically reducing gradient communication costs in distributed systems. \citet{QSGD} and \citet{wen2017terngrad} provide schemes with theoretical guarantees by using random number generators to ensure that the compressed gradient is still an unbiased approximation to the true gradient. Whilst unbiasedness allows these schemes to bootstrap \SGD{} theory, it unfortunately comes at the cost of hugely inflated variance, and this variance explosion\footnote{For the version of \QSGD{} with 1-bit compression, the variance explosion is by a factor of $\sqrt{d}$, where $d$ is the number of weights. It is common to have $d > 10^8$ in modern deep networks.} basically renders the \SGD{}-style bounds vacuous in the face of the empirical success of these algorithms. The situation only gets worse when the parameter server must aggregate and send back the received gradients, since merely summing up quantised updates reduces the quantisation efficiency. We compare the schemes in Table \ref{tab:compare}---notice how the existing schemes pick up log factors in the transmission from parameter-server back to workers. Our proposed approach is different, in that we directly employ the sign gradient which is \emph{biased}. This avoids the randomisation needed for constructing an unbiased quantised estimate, avoids the problem of variance exploding in the theoretical bounds, and even enables 1-bit compression in both directions between parameter-server and workers, at no theoretical loss compared to distributed \SGD{}.

\begin{table}[t]
\centering
\caption{The communication cost of different gradient compression schemes, when  training a $d$-dimensional model with $M$ workers.}
\label{tab:compare}
\vskip 0.15in
\begin{small}
\begin{tabular}{lr}
\toprule
\textsc{Algorithm} & \textsc{\# bits per iteration} \\
\midrule
\SGD{} \cite{robbins1951} & $64Md$\\
\QSGD{} \cite{QSGD}   & $(2+\log(2M+1))Md$ \\
\TernGrad{} \cite{wen2017terngrad}& $(2+\log(2M+1))Md$ \\
\signSGD{} with majority vote & $2Md$ \\
\bottomrule
\end{tabular}
\end{small}
\vskip -0.1in
\end{table}

\textbf{Deep learning:} stochastic gradient descent \cite{robbins1951} is a simple and extremely effective optimiser for training neural networks. Still \citet{riedmiller_direct_1993} noted the good practical performance of sign-based methods like \Rprop{} for training deep nets, and since then variants such as \RMSprop{} \cite{tieleman_rmsprop_2012} and \Adam{} \cite{kingma_adam:_2015} have become increasingly popular. \Adam{} updates the weights according to the mean divided by the root mean square of recent gradients. Let $\langle . \rangle_\beta$ denote an exponential moving average with timescale $\beta$, and $\tilde{g}$ the stochastic gradient. Then
\begin{align*}
	\text{\Adam{} step } &\sim\frac{\langle \tilde{g}\rangle_{\beta_1}}{\sqrt{\langle \tilde{g}^2\rangle_{\beta_2}}}
\intertext{Therefore taking the time scale of the exponential moving averages to zero, $\beta_1, \beta_2 \rightarrow 0$, yields \signSGD{}}
	\text{\signSGD{} step } &= \text{sign}(\tilde{g})= \frac{\tilde{g}}{\sqrt{\tilde{g}^2}}.
\end{align*}
To date there has been no convincing theory of the \{\Rprop{}, \RMSprop{}, \Adam{}\} family of algorithms, known as `adaptive gradient methods'. Indeed \citet{adam-non-converge} point out problems in the original convergence proof of \Adam{}, even in the convex setting. Since \signSGD{} belongs to this same family of algorithms, we expect that our theoretical analysis should be relevant for all algorithms in the family. In a parallel work, \citet{balles2018dissecting} explore the connection between \signSGD{} and \Adam{} in greater detail, though their theory is more restricted and lives in the convex world, and they do not analyse \Signum{} as we do but employ it on heuristic grounds.

\textbf{Optimisation:} much of classic optimisation theory focuses on convex problems, where \emph{local information} in the gradient tells you \emph{global information} about the direction to the minimum. Whilst elegant, this theory is less relevant for modern problems in deep learning which are non-convex. In non-convex optimisation, finding the global minimum is generally intractable. Theorists usually settle for measuring some restricted notion of success, such as rate of convergence to stationary points \ \citep{ghadimi2013stochastic,allen-zhu_natasha:_2017} or local minima \citep{nesterov_cubic_2006}. Though \citet{dauphin_identifying_2014} suggest saddle points should abound in neural network error landscapes, practitioners report not finding this a problem in practice \cite{goodfellow_qualitatively_2015} and therefore a theory of convergence to stationary points is useful and informative. 

On the algorithmic level, the \emph{non-stochastic} version of \signSGD{} can be viewed as the classical steepest descent algorithm with $\ell_\infty$-norm \citep[see, e.g.,][Section 9.4]{boyd2004convex}.   The convergence of steepest descent is well-known \citep[see][Appendix C, for an analysis of signed gradient updates under the Polyak-\L{}ojasiewicz condition]{Karimi2016LinearCO}. \citet{ssd} study a stochastic version of the algorithm, but again under an $\ell_\infty$ majorisation. To the best of our knowledge, we are the first to study the convergence of signed gradient updates under an (often more natural) $\ell_2$ majorisation (Assumption \ref{a:coordinate_lip}).


\textbf{Experimental benchmarks:} throughout the paper we will make use of the CIFAR-10 \cite{Krizhevsky09learningmultiple} and Imagenet \cite{ILSVRC15} datasets. As for neural network architectures, we train Resnet-20 \cite{he2016deep} on CIFAR-10, and Resnet-50 v2 \cite{resnetv2} on Imagenet.


\section{Convergence Analysis of \signSGD{}}
\label{sec:theorysection}

We begin our analysis of sign stochastic gradient descent in the non-convex setting. The standard assumptions of the stochastic optimisation literature are nicely summarised by \citet{allen-zhu_natasha_2017}. We will use more fine-grained assumptions. \signSGD{} can exploit this additional structure, much as \Adagrad{} \cite{adagrad} exploits sparsity. We emphasise that these fine-grained assumptions do not lose anything over typical \SGD{} assumptions, since \emph{our assumptions can be obtained from SGD assumptions and vice versa}.

\begin{assumption}[Lower bound]\label{a:lower}
  For all $x$ and some constant $f^*$, we have objective value
   $   f(x) \geq f^*.$
\end{assumption}

This assumption is standard and necessary for guaranteed convergence to a stationary point. 

The next two assumptions will naturally encode notions of heterogeneous curvature and gradient noise.

  \begin{assumption}[Smooth]\label{a:coordinate_lip}
   Let $g(x)$ denote the gradient of the objective $f(.)$ evaluated at point $x$. Then $\forall x, y$ we require that for some non-negative constant $\vec L := [L_1,...,L_d]$
  \begin{align*}
\abs[\Big]{f(y) - \sq*{f(x) + g(x)^T(y-x)}} &\leq \frac{1}{2}\sum_i L_i (y_i-x_i)^2.
  \end{align*}
  \end{assumption}
  For twice differentiable $f$, this implies that $ -\text{diag}(\vec L) \prec H\prec \text{diag}(\vec L)$. This is related to the slightly weaker coordinate-wise Lipschitz condition used in the block coordinate descent literature \citep{richtarik2014iteration}. 
    
  Lastly, we assume that we have access to the following stochastic gradient oracle:
  \begin{assumption}[Variance bound]\label{a:hoeffding}
	Upon receiving query $x \in \mathbb{R}^d$, the stochastic gradient oracle gives us an \emph{independent} unbiased estimate $\tilde{g}$ that has coordinate bounded variance:
	\begin{equation*}
	\mathbb{E}[\tilde{g}(x)]= g(x), \qquad   \E\left[(\tilde{g}(x)_i-g(x)_i)^2\right]  \leq \sigma_i^2
	\end{equation*}
    for a vector of non-negative constants $\vec \sigma := [\sigma_1, .., \sigma_d]$.
\end{assumption}

Bounded variance may be unrealistic in practice, since as $x \rightarrow \infty$ the variance might well diverge. Still this assumption is useful for understanding key properties of stochastic optimisation algorithms. In our theorem, we will be working with a mini-batch of size $n_k$ in the $k^{th}$ iteration, and the corresponding mini-batch stochastic gradient is modeled as the average of $n_k$ calls to the above oracle at $x_k$. This squashes the variance bound on $\tilde{g}(x)_i$ to $\sigma_i^2/n_k$.

Assumptions~\ref{a:coordinate_lip}~and~\ref{a:hoeffding} are different from the assumptions typically used for analysing the convergence properties of \SGD{} \citep{nesterov2013introductory,ghadimi2013stochastic}, but they are natural to the geometry induced by algorithms with signed updates such as \signSGD{} and \Signum{}.

Assumption~\ref{a:coordinate_lip} is more fine-grained than the standard assumption, which is recovered by defining $\ell_2$ Lipschitz constant $L := \|\vec L\|_\infty = \max_{i} L_i$. Then Assumption \ref{a:coordinate_lip} implies that
  \begin{align*}
\abs[\Big]{f(y) - \sq*{f(x) + g(x)^T(y-x)}} &\leq \frac{L}{2}\|y_i-x_i\|_2^2.
  \end{align*}
  which is the standard assumption of Lipschitz smoothness.

Assumption~\ref{a:hoeffding} is  more fined-grained than the standard stochastic gradient oracle assumption used for \SGD{} analysis. But again, the standard variance bound is recovered by defining $\sigma^2 := ||\vec \sigma||_2^2$. Then Assumption \ref{a:hoeffding} implies that
	\begin{equation*}
\E\|\tilde{g}(x)-g(x)\|^2  \leq \sigma^2
	\end{equation*}
   which is the standard assumption of bounded total variance.

Under these assumptions, we have the following result:

\begin{tcolorbox}[boxsep=0pt, arc=0pt,
    boxrule=0.5pt,
 colback=white]
\begin{restatable}[Non-convex convergence rate of 
\signSGD{}]{theorem}{signSGDtheorem}\label{thm:signSGD}
Run algorithm \ref{alg:signSGD} for $K$ iterations under Assumptions 1 to 3. Set the learning rate and mini-batch size (independently of step $k$) as
  \begin{equation*}
  \delta_k = \frac{1}{\sqrt{\Lnorm K}}, \qquad \qquad n_k = K
  \end{equation*}
  Let   $N$ be the cumulative number of stochastic gradient calls up to step $K$, i.e.\  $N = \text{O}(K^2)$. Then we have
   \begin{align*}
    &\mathbb{E}\sq*{\frac{1}{K}\sum_{k=0}^{K-1} \norm{g_k}_1}^2 \\
    & \qquad \leq \frac{1}{\sqrt{N}}\sq*{\sqrt{\Lnorm}\br*{f_0 - f_*+\frac{1}{2}} + 2\|\vec\sigma\|_1}^2
    \end{align*}
 
\end{restatable}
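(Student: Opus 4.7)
The plan is to follow a standard descent-lemma style argument, but to carefully handle the fact that $\sign(\tilde g_k)$ is a biased estimator of $\sign(g_k)$. Concretely, I would first apply Assumption~\ref{a:coordinate_lip} to a single step. Since $x_{k+1}-x_k = -\delta \sign(\tilde g_k)$, each coordinate difference squared is exactly $\delta^2$, so the smoothness bound collapses to
\begin{equation*}
f(x_{k+1}) \le f(x_k) - \delta\, g_k^T\sign(\tilde g_k) + \tfrac{\delta^2}{2}\|\vec L\|_1.
\end{equation*}
All the subtlety therefore lives in bounding $\mathbb{E}[g_k^T\sign(\tilde g_k)]$ from below by something close to $\|g_k\|_1$.

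The key step, and the one I expect to be the main obstacle, is controlling this biased inner product coordinatewise. I would split the sum: for each $i$, $g_{k,i}\sign(\tilde g_{k,i})$ equals $+|g_{k,i}|$ on the event that $\sign(\tilde g_{k,i})=\sign(g_{k,i})$ and $-|g_{k,i}|$ otherwise, giving
\begin{equation*}
\mathbb{E}[g_{k,i}\sign(\tilde g_{k,i})] \ge |g_{k,i}|\bigl(1 - 2\,\mathbb{P}[\sign(\tilde g_{k,i})\ne \sign(g_{k,i})]\bigr).
\end{equation*}
A misclassified sign requires $|\tilde g_{k,i}-g_{k,i}|\ge |g_{k,i}|$, so by Markov's inequality the probability of a sign flip is at most $\mathbb{E}|\tilde g_{k,i}-g_{k,i}|/|g_{k,i}|$, and Jensen together with Assumption~\ref{a:hoeffding} (with mini-batch size $n_k$) gives $\mathbb{E}|\tilde g_{k,i}-g_{k,i}| \le \sigma_i/\sqrt{n_k}$. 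The dangerous $1/|g_{k,i}|$ factor cancels the $|g_{k,i}|$ in front, leaving the clean bound $\mathbb{E}[g_{k,i}\sign(\tilde g_{k,i})] \ge |g_{k,i}| - 2\sigma_i/\sqrt{n_k}$, which sums over $i$ to $\|g_k\|_1 - 2\|\vec\sigma\|_1/\sqrt{n_k}$. This cancellation is the crux of the proof.

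Plugging this into the descent inequality, taking expectations and rearranging yields
\begin{equation*}
\delta\,\mathbb{E}\|g_k\|_1 \le \mathbb{E}[f(x_k)-f(x_{k+1})] + \tfrac{2\delta\|\vec\sigma\|_1}{\sqrt{n_k}} + \tfrac{\delta^2}{2}\|\vec L\|_1.
\end{equation*}
Telescoping from $k=0$ to $K-1$ and invoking Assumption~\ref{a:lower} to replace the final $f(x_K)$ by $f_*$ yields, after dividing by $K\delta$,
\begin{equation*}
\frac{1}{K}\sum_{k=0}^{K-1}\mathbb{E}\|g_k\|_1 \le \frac{f_0-f_*}{K\delta} + \frac{2\|\vec\sigma\|_1}{\sqrt{n_k}} + \frac{\delta\|\vec L\|_1}{2}.
\end{equation*}

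Finally I would substitute the stated choices $\delta = 1/\sqrt{\|\vec L\|_1 K}$ and $n_k = K$, which balance the three terms and collapse the right-hand side to $\tfrac{1}{\sqrt{K}}\bigl[\sqrt{\|\vec L\|_1}(f_0-f_*+\tfrac12) + 2\|\vec\sigma\|_1\bigr]$. Since $N = K \cdot n_k = K^2$, we have $1/\sqrt{K} = 1/N^{1/4}$, and squaring both sides (which is legal because both sides are non-negative, and the LHS squared equals $(\mathbb{E}[\cdot])^2$ by Jensen) produces the $1/\sqrt{N}$ rate in the theorem statement.
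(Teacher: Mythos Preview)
Your proposal is correct and follows essentially the same route as the paper's proof: apply the coordinate-wise smoothness bound, decompose $g_k^T\sign(\tilde g_k)$ via the sign-mismatch indicator, bound the mismatch probability by Markov plus Jensen to get $\sigma_i/(\sqrt{n_k}\,|g_{k,i}|)$ so that the $|g_{k,i}|$ factors cancel, then telescope and substitute the stated $\delta$ and $n_k$. The only quibble is your remark that ``the LHS squared equals $(\mathbb{E}[\cdot])^2$ by Jensen'': this is just linearity of expectation, not Jensen, but it does not affect the argument.
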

\end{tcolorbox}
	The proof is given in Section \ref{app:sign} of the supplementary material. It follows the well known strategy of relating the norm of the gradient to the expected improvement made in a single algorithmic step, and comparing this with the total possible improvement under Assumption \ref{a:lower}. A key technical challenge we overcome is in showing how to directly deal with a biased approximation to the true gradient. Here we will provide some intuition about the proof. 
    
    To pass the stochasticity through the non-linear sign operation in a controlled fashion, we need to prove the key statement that at the $k^{th}$ step for the $i^{th}$ gradient component
    \begin{equation*}
  \mathbb{P}\sq{\text{sign}(\tilde{g}_{k,i}) \neq \text{sign}(g_{k,i})} \leq \frac{\sigma_{k,i}}{|g_{k,i}|}
    \end{equation*}
    This formalises the intuition that the probability of the sign of a component of the stochastic gradient being incorrect should be controlled by the signal-to-noise ratio of that component. When a component's gradient is large, the probability of making a mistake is low, and one expects to make good progress. When the gradient is small compared to the noise, the probability of making mistakes can be high, but due to the large batch size this only happens when we are already close to a stationary point.
  
The large batch size in the theorem may seem unusual, but large batch training actually presents a systems advantage \cite{Goyal2017AccurateLM} since it can be parallelised. The number of gradient calls $N$ is the important quantity to measure convergence, but large batch training achieves $N$ gradient calls in only O$(\sqrt{N})$ iterations whereas small batch training needs $O(N)$ iterations. Fewer iterations also means fewer rounds of communication in the distributed setting. Convergence guarantees \emph{can} be extended to the small batch case under the additional assumption of unimodal symmetric gradient noise using Lemma \ref{lem:symm} in the supplementary, but we leave this for future work. Experiments in this paper were indeed conducted in the small batch regime.
  
        
    Another unusual feature requiring discussion is the $\ell_1$ geometry of \signSGD{}.
    The convergence rate strikingly depends on the $\ell_1$-norm of the gradient, the stochasticity and the curvature. To understand this better, let's define a notion of density of a high-dimensional vector $\vec v \in \mathbb{R}^d$ as follows:
    \begin{equation}
    	\phi(\vec v) := \frac{\|\vec v\|_1^2}{d\|\vec v\|_2^2}
    \end{equation}
    To see that this is a natural definition of density, notice that for a fully dense vector, $\phi(\vec v) = 1$ and for a fully sparse vector, $\phi(\vec v) = 1/d \approx 0$. We trivially have that $\|\vec{v}\|_1^2\leq\phi(\vec{v})d^2\|\vec{v}\|_\infty^2$ so this notion of density provides an easy way to translate from norms in $\ell_1$ to both $\ell_2$ and $\ell_\infty$.
    
    Remember that under our assumptions, \SGD{}-style assumptions hold with Lipschitz constant $L:=\|\vec{L}\|_\infty$ and total variance bound $\sigma^2 := \|\vec\sigma\|_2^2$. Using our notion of density we can translate our constants into the language of \SGD{}:
    \begin{align*}
    \|g_k\|_1^2 &= \phi(g_k)d\|g_k\|_2^2 &&\geq\phi(g)d\|g_k\|_2^2\\
    	\|\vec{L}\|_1^2 &\leq \phi(\vec{L})d^2 \|\vec{L}\|_\infty^2 &&=\phi(\vec{L})d^2 L^2 \\
    	\|\vec{\sigma}\|_1^2 &= \phi(\vec{\sigma})d \|\vec{\sigma}\|_2^2 &&=\phi(\vec{\sigma})d \sigma^2
    \end{align*}
    where we have assumed $\phi(g)$ to be a lower bound on the gradient density over the entire space. Using that $(x+y)^2\leq2(x^2+y^2)$ and changing variables in the bound, we reach the following result for \signSGD{}
    \begin{align*}
 & \mathbb{E}\sq*{\frac{1}{K}\sum_{k=0}^{K-1}\norm{g_k}_2}^2 \\
    & \qquad \leq \frac{2}{\sqrt{N}}\sq*{\frac{\sqrt{\phi(\vec{L})}}{\phi(g)} L\br*{f_0 - f_*+\frac{1}{2}}^2 + 4\frac{\phi(\vec{\sigma})}{\phi(g)} \sigma^2}
    \end{align*}
    whereas, for comparison, a typical \SGD{} bound (proved in Supplementary \ref{app:sgdtheory}) is
    \begin{align*}
    \mathbb{E}\sq*{\frac{1}{K}\sum_{k=0}^{K-1} \norm{g_k}_2^2} \leq \frac{1}{\sqrt{N}}\sq*{2L(f_0 - f_*) + \sigma^2}.
    \end{align*}
The bounds are very similar, except for most notably the appearance of ratios of densities $R_1$ and $R_2$, defined as
\begin{align*}
R_1 := \frac{\sqrt{\phi(\vec{L})}}{\phi(g)} \qquad\qquad R_2 := \frac{\phi(\vec{\sigma})}{\phi(g)}
\end{align*}
Na\"{i}vely comparing the bounds suggests breaking into cases:
\begin{enumerate}[(I)]
\item $R_1 \gg 1$ and $R_2 \gg 1$. This means that both the curvature and the stochasticity are much denser than the typical gradient and the comparison suggests \SGD{} is better suited than \signSGD{}.
\item NOT$[R_1 \gg 1]$ and NOT$[R_2 \gg 1]$. This means that neither curvature nor stochasticity are much denser than the gradient, and the comparison suggests that \signSGD{} may converge as fast or faster than \SGD{}, and also get the benefits of gradient compression.
\item neither of the above holds, for example $R_1 \ll 1$ and $R_2 \gg 1$. Then the comparison is indeterminate about whether \signSGD{} or \SGD{} is more suitable.
\end{enumerate}

Let's briefly provide some intuition to understand how it's possible that \signSGD{} could outperform \SGD{}. Imagine a scenario where the gradients are dense but there is a sparse set of extremely noisy components. Then the dynamics of \SGD{} will be dominated by this noise, and (unless the learning rate is reduced a lot) \SGD{} will effectively perform a random walk along these noisy components, paying less attention to the gradient signal. \signSGD{} however will treat all components equally, so it will scale down the sparse noise and scale up the dense gradients comparatively, and thus make good progress. See Figure \ref{fig:sparse_noise} in the supplementary for a simple example of this.

Still we must be careful when comparing upper bounds, and interpreting the dependence on curvature density is more subtle than noise density. This is because the \SGD{} bound proved in Supplementary \ref{app:sgdtheory} is slacker under situations of sparse curvature than dense curvature. That is to say that \SGD{}, like \signSGD{}, benefits under situations of sparse curvature but this is not reflected in the \SGD{} bound. The potentially slack step in \SGD{}'s analysis is in switching from $L_i$ to $\|\vec L\|_\infty$. Because of this it is safer to interpret the curvature comparison as telling us a regime where \signSGD{} is expected to lose out to \SGD{} (rather than vice versa). This happens when $R_1 \gg 1$ and gradients are sparser than curvature. Intuitively, in this case \signSGD{} will push many components in highly curved directions even though these components had small gradient, and this can be undesirable.

To summarise, our theory suggests that when gradients are dense, \signSGD{} should be more robust to large stochasticity on a sparse set of coordinates. When gradients are sparse, \SGD{} should be more robust to dense curvature and noise. In practice for deep networks, we find that \signSGD{} converges about as fast as \SGD{}. That would suggest that we are either in regime (II) or (III) above. But what is the real situation for the error landscape of deep neural networks? 

      \begin{figure}[t]
\vskip 0.2in
\begin{center}
\centerline{\includegraphics[width=\columnwidth]{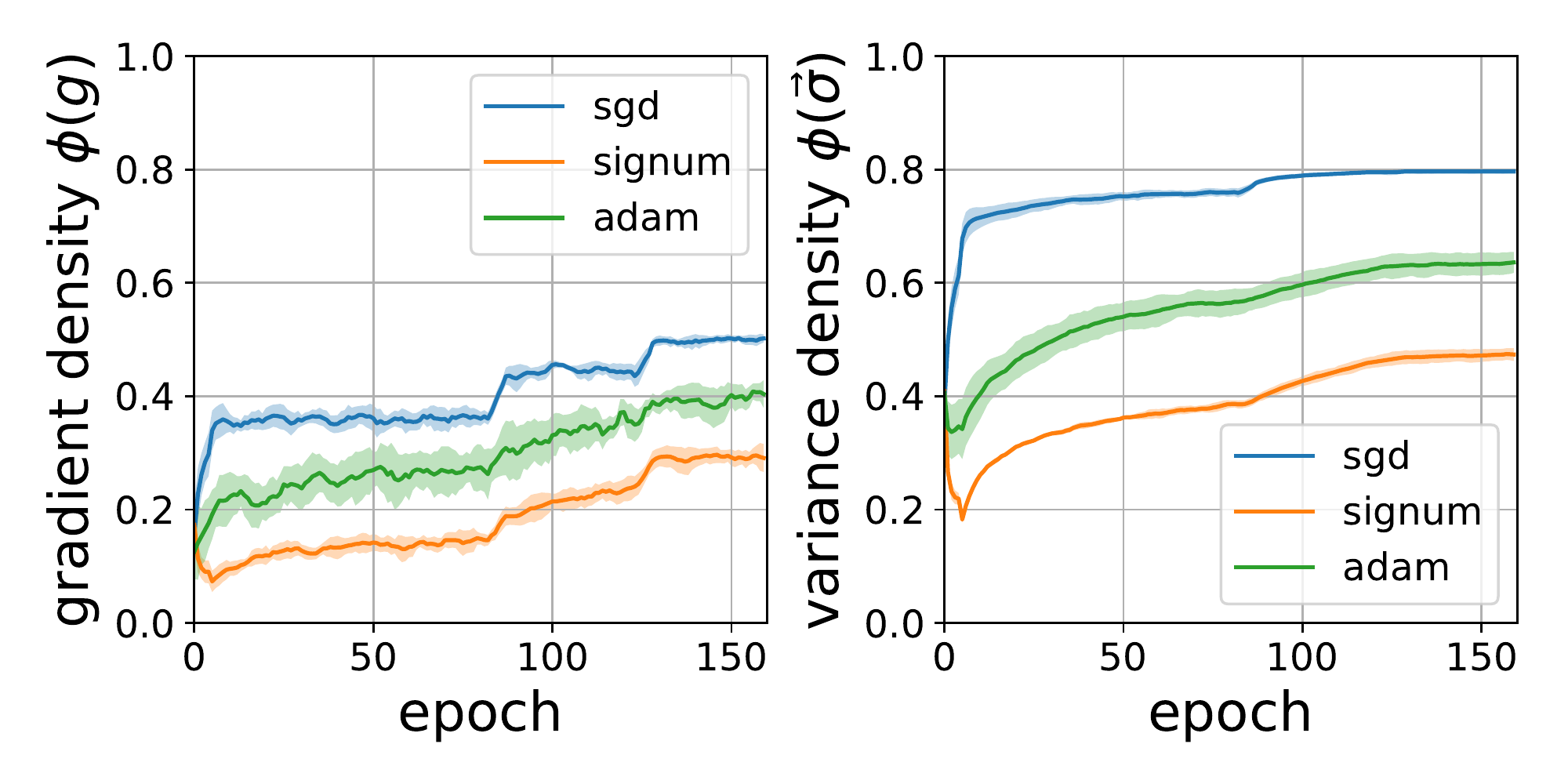}}
\caption{Gradient and noise density during an entire training run of a Resnet-20 model on the CIFAR-10 dataset. Results are averaged over 3 repeats for each of 3 different training algorithms, and corresponding error bars are plotted. At the beginning of every epoch, at that fixed point in parameter space, we do a full pass over the data to compute the exact mean of the stochastic gradient, $g$, and its exact standard deviation vector $\vec{\sigma}$ (square root of diagonal of covariance matrix). The density measure $\phi(\vec{v}):=\frac{\|v\|_1^2}{d\|v\|_2^2}$ is 1 for a fully dense vector and $\approx 0$ for a fully sparse vector. Notice that both gradient and noise are dense, and moreover the densities appear to be coupled during training. Noticable jumps occur at epoch 80 and 120 when the learning rate is decimated. Our stochastic gradient oracle (Assumption \ref{a:hoeffding}) is fine-grained enough to encode such dense geometries of noise.
}
\label{fig:grad-stats}
\end{center}
\vskip -0.2in
\end{figure}

To measure gradient and noise densities in practice, we use Welford's algorithm \cite{welford,knuth} to compute the true gradient $g$ and its stochasticity vector $\vec{\sigma}$ at every epoch of training for a Resnet-20 model on CIFAR-10. Welford's algorithm is numerically stable and only takes a single pass through the data to compute the vectorial mean and variance. Therefore if we train a network for 160 epochs, we make an additional 160 passes through the data to evaluate these gradient statistics. Results are plotted in Figure \ref{fig:grad-stats}. Notice that the gradient density and noise density are of the same order throughout training, and this indeed puts us in regime (II) or (III) as predicted by our theory. 

In Figure \ref{fig:grad-dense} of the supplementary, we present preliminary evidence that this finding generalises, by showing that gradients are dense across a range of datasets and network architectures. We have not devised an efficient means to measure curvature densities, which we leave for future work.
\section{Majority Rule: the Power of Democracy in the Multi-Worker Setting}
\label{sec:majority}

In the most common form of distributed training, workers (such as GPUs) each evaluate gradients on their own split of the data, and send the results up to a parameter-server. The parameter server aggregates the results and transmits them back to each worker \cite{muli}.

\begin{figure}[t]
\vskip 0.2in
\begin{center}
\centerline{\includegraphics[width=\columnwidth]{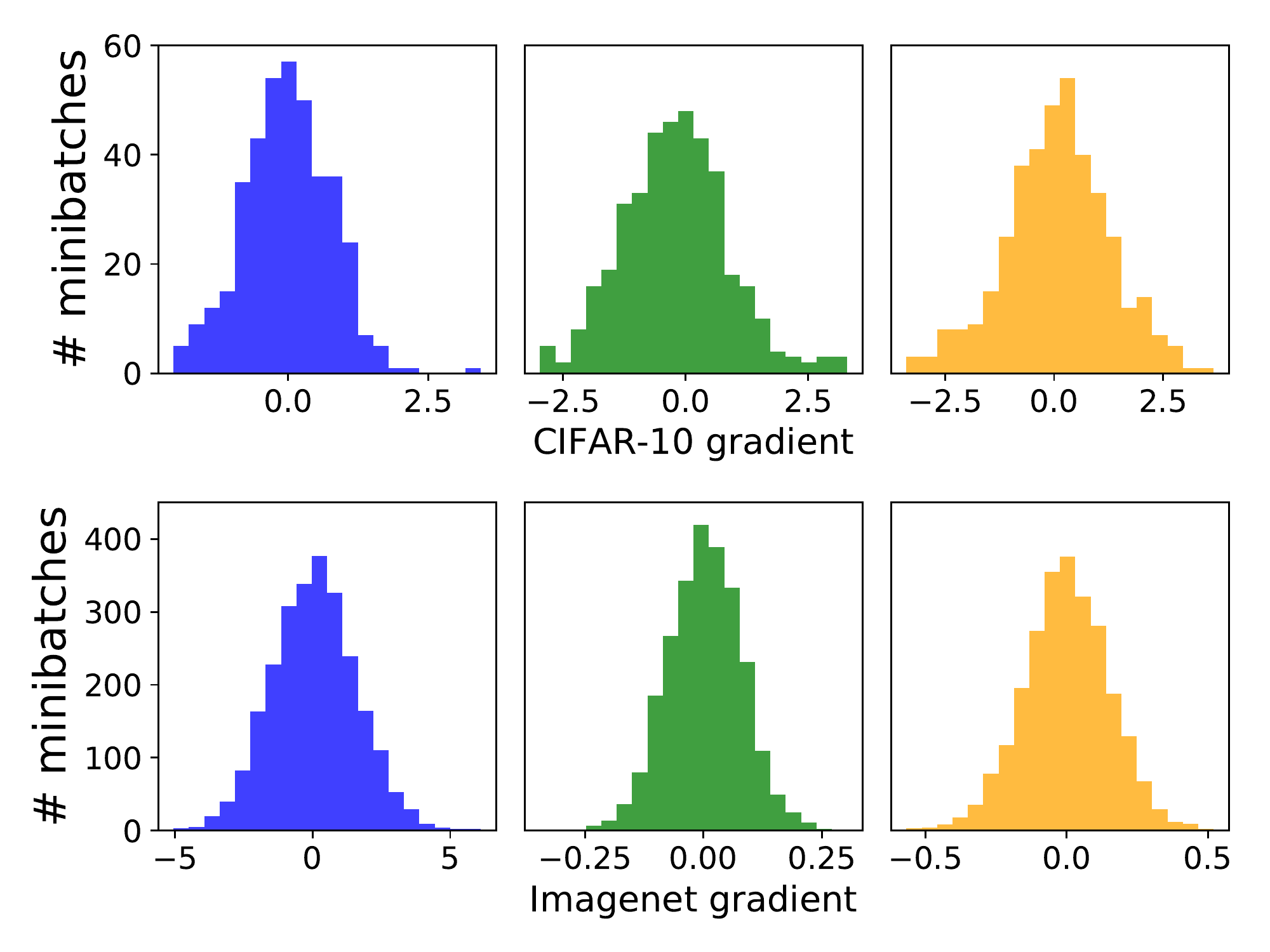}}
\caption{Histograms of the noise in the stochastic gradient, each plot for a different randomly chosen parameter (not cherry-picked). Top row: Resnet-20 architecture trained to epoch 50 on CIFAR-10 with a batch size of 128. Bottom row: Resnet-50 architecture trained to epoch 50 on Imagenet with a batch size of 256. From left to right: model trained with \SGD{}, \Signum{}, \Adam{}. All noise distributions appear to be unimodal and approximately symmetric. For a batch size of 256 Imagenet images, the central limit theorem has visibly kicked in and the distributions look Gaussian.}
\label{fig:symmetry}
\end{center}
\vskip -0.2in
\end{figure}

Up until this point in the paper, we have only analysed \signSGD{} where the update is of the form
\begin{equation*}
	x_{k+1} = x_k - \delta\, \text{sign}(\tilde{g})
\end{equation*}
To get the benefits of compression we want the $m^{th}$ worker to send the sign of the gradient evaluated only on its portion of the data. This suggests an update of the form
\begin{equation*}
	x_{k+1} = x_k - \delta\sum_{m=1}^M \text{sign}(\tilde g_m) \tag{good}
\end{equation*}
This scheme is good since what gets sent \emph{to} the parameter will be 1-bit compressed. But what gets sent back almost certainly will not. Could we hope for a scheme where all communication is 1-bit compressed?

What about the following scheme:
\begin{equation*}
	x_{k+1} = x_k - \delta\,\text{sign}\sq*{\sum_{m=1}^M \text{sign}(\tilde g_m)}\tag{best}
\end{equation*}
This is called majority vote, since each worker is essentially voting with its belief about the sign of the true gradient. The parameter server counts the votes, and sends its 1-bit decision back to every worker.

The machinery of Theorem \ref{thm:signSGD} is enough to establish convergence for the (good) scheme, but majority vote is more elegant \emph{and} more communication efficient, therefore we focus on this scheme from here on. 

In Theorem \ref{thm:majority} we first establish the general convergence rate of majority vote, followed by a regime where majority vote enjoys a variance reduction from $\|\vec{\sigma}\|_1$ to $\|\vec{\sigma}\|_1/\sqrt{M}$.

\begin{tcolorbox}[boxsep=0pt, arc=0pt,
    boxrule=0.5pt,
 colback=white]
\begin{restatable}[Non-convex convergence rate of distributed
\signSGD{} with majority vote]{theorem}{majoritytheorem}\label{thm:majority}
Run algorithm \ref{alg:majority} for $K$ iterations under Assumptions 1 to 3. Set the learning rate and mini-batch size for each worker (independently of step $k$) as
  \begin{equation*}
  \delta_k = \frac{1}{\sqrt{\Lnorm K}} \qquad \qquad n_k = K
  \end{equation*}
  Then \textbf{(a)} majority vote with $M$ workers converges at least as fast as $\signSGD$ in Theorem \ref{thm:signSGD}.\\
  
  And \textbf{(b)} further assuming that the noise in each component of the stochastic gradient is unimodal and symmetric about the mean (e.g.\ Gaussian), majority vote converges at improved rate:
   \begin{align*}
&\mathbb{E}\sq*{\frac{1}{K}\sum_{k=0}^{K-1} \norm{g_k}_1}^2 \\
    & \qquad \leq \frac{1}{\sqrt{N}}\sq*{\sqrt{\Lnorm}\br*{f_0 - f_*+\frac{1}{2}} + \frac{2}{\sqrt{M}}\|\vec\sigma\|_1}^2
    \end{align*}
      where $N$ is the cumulative number of stochastic gradient calls per worker up to step $K$. 
\end{restatable}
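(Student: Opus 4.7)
I would mirror the proof of Theorem \ref{thm:signSGD}. Let $V_k := \sum_{m=1}^M \sign(\tilde g_m(x_k))$ be the aggregated vote vector, so that one step of Algorithm \ref{alg:majority} reads $x_{k+1} = x_k - \delta_k \sign(V_k)$. Applying Assumption \ref{a:coordinate_lip} and taking the conditional expectation over the workers' stochastic gradients yields the per-step descent inequality
\begin{align*}
\mathbb{E}[f_{k+1}-f_k \mid x_k] \leq -\delta_k\sum_{i=1}^d |g_{k,i}|\bigl(1 - 2p_{k,i}\bigr) + \tfrac{\delta_k^2}{2}\Lnorm,
\end{align*}
where $p_{k,i} := \mathbb{P}[\sign(V_{k,i}) \neq \sign(g_{k,i})]$ is the failure probability of the majority vote on coordinate $i$. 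With this in hand, the prescribed choice of $\delta_k$, telescoping, and rearranging reproduce the Theorem \ref{thm:signSGD} calculation, so everything reduces to bounding $p_{k,i}$.

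\textbf{Part (a).} The proof of Theorem \ref{thm:signSGD} already yields the single-worker failure probability $q_{k,i} := \mathbb{P}[\sign(\tilde g_m(x_k)_i) \neq \sign(g_{k,i})] \leq \sigma_i/(|g_{k,i}|\sqrt{K})$, via Markov plus Jensen on Assumption \ref{a:hoeffding} at mini-batch size $K$. Majority vote fails only when at least $M/2$ of the $M$ independent worker votes are wrong, so Markov applied to the count of wrong votes gives $p_{k,i} \leq 2q_{k,i}$. Feeding this into the descent inequality and proceeding exactly as in Theorem \ref{thm:signSGD} reproduces the same $O(1/\sqrt N)$ rate (with the factor $2$ absorbed into the constant on the variance term), proving part (a).

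\textbf{Part (b).} The Markov argument from (a) is blind to independence across workers and cannot yield a $1/\sqrt M$ reduction. The sharper ingredient is a statement (Lemma \ref{lem:symm} in the supplementary) whose proof is an application of Gauss's 1823 inequality for unimodal distributions: under the symmetric-unimodal assumption on the noise, it upgrades the single-worker bound to a two-sided $q_{k,i} \leq \tfrac12 - c\min\bigl(1,\,|g_{k,i}|\sqrt{K}/\sigma_i\bigr)$ for an absolute constant $c$. Let $W_{m,i} := \sign(g_{k,i})\sign(\tilde g_m(x_k)_i) \in \{-1,+1\}$; these are i.i.d.\ across $m$ with $\mathbb{E}[W_{m,i}] = 1 - 2q_{k,i} \geq 0$ and $\mathrm{Var}(W_{m,i}) \leq 1$. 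Applying Cantelli's one-sided Chebyshev inequality to $\sum_m W_{m,i}$ in the regime $|g_{k,i}|\sqrt{MK} \geq \sigma_i$, and using the trivial bound $p_{k,i}\leq 1$ when that fails, gives $|g_{k,i}|\,p_{k,i} \leq \sigma_i/\sqrt{MK}$ in both regimes up to an absolute constant. Substituting this tighter bound into the descent inequality replaces $\|\vec\sigma\|_1$ by $\|\vec\sigma\|_1/\sqrt M$ in the variance term, yielding (b).

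\textbf{Main obstacle.} The delicate step is the passage from single-worker to majority-vote error probabilities in part (b). Markov arguments in the flavour of part (a) lose independence and plateau at $O(1)$ in $M$; Gauss's inequality is exactly what converts the crude Markov bound $q_{k,i} \leq \sigma_i/(|g_{k,i}|\sqrt K)$ into a quantitative gap $1/2 - q_{k,i}$ that Cantelli can then leverage into $1/\sqrt M$-type concentration across independent workers. Getting the constants to line up across the two pieces of Gauss's inequality (the split at $k \approx \sqrt{4/3}$) so they collapse into the single clean estimate $|g_{k,i}|p_{k,i} \leq \sigma_i/\sqrt{MK}$ is where most of the careful bookkeeping will live.
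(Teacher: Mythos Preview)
Your overall architecture for both parts matches the paper: reduce to bounding the per-coordinate majority-vote failure probability $p_{k,i}$, then plug into the Theorem~\ref{thm:signSGD} machinery. Two of the technical shortcuts you take, however, do not go through.

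\textbf{Part (a).} Markov on the count of wrong votes yields only $p_{k,i}\le 2q_{k,i}$, not $p_{k,i}\le q_{k,i}$. Carrying this through produces a variance term $4\|\vec\sigma\|_1$ in place of the $2\|\vec\sigma\|_1$ of Theorem~\ref{thm:signSGD}, so you have not established ``at least as fast'' as literally claimed. The paper avoids the loss by splitting on the signal-to-noise ratio $S=|g_{k,i}|\sqrt{K}/\sigma_i$: when $S\le 1$ the target inequality $|g_{k,i}|\,p_{k,i}\le\sigma_i/\sqrt{K}$ is trivially true, and when $S>1$ Cantelli's inequality gives $q_{k,i}<\tfrac12$, whereupon majority vote is the maximum-likelihood decoder of a repetition code and hence $p_{k,i}\le q_{k,i}$ exactly.

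\textbf{Part (b).} The strategy (Gauss for the single worker, Cantelli for the aggregate) is the paper's, but your two simplifications break the argument in the high-SNR regime. First, collapsing Gauss's inequality to $q_{k,i}\le\tfrac12-c\min(1,S)$ discards the branch $q_{k,i}\le\tfrac{2}{9}S^{-2}$ that holds for $S>2/\sqrt{3}$. Second, replacing $\Var(W_{m,i})=4q_{k,i}(1-q_{k,i})$ by the crude bound $\le 1$ throws away the fact that the variance is itself small when $q_{k,i}$ is small. With only $\epsilon\ge c$ and $\Var\le 1$, Cantelli gives $p_{k,i}\le(1+4c^2M)^{-1}$, which is $O(1/M)$ uniformly in $S$; but then $|g_{k,i}|\,p_{k,i}=S\cdot(\sigma_i/\sqrt{K})\cdot p_{k,i}$ diverges as $S\to\infty$ for fixed $M$, so the claimed bound $|g_{k,i}|\,p_{k,i}\le C\sigma_i/\sqrt{MK}$ fails. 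The paper keeps both branches of Gauss and feeds the exact binomial variance $Mq(1-q)$ into Cantelli, obtaining $p_{k,i}\le\tfrac{1}{2\sqrt{M}}\sqrt{1/(4\epsilon^2)-1}$ and then checking case by case that $1/(4\epsilon^2)-1\le 4/S^2$, which delivers $p_{k,i}\le 1/(S\sqrt{M})$ cleanly across all $S$. Your closing remark that ``getting the constants to line up across the two pieces of Gauss's inequality'' is where the work lives is exactly right; the point is that the work cannot be sidestepped by the $\min(1,S)$ shortcut.
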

\end{tcolorbox}

The proof is given in the supplementary material, but here we sketch some details. Consider the signal-to-noise ratio of a single component of the stochastic gradient, defined as $S := \frac{|g_i|}{\sigma_i}$. For $S<1$ the gradient is small and it doesn't matter if we get the sign wrong. For $S>1$, we can show using a one-sided version of Chebyshev's inequality \cite{cantelli} that the failure probability, $q$, of that sign bit on an individual worker satisfies $q<\frac{1}{2}$. This means that the parameter server is essentially receiving a repetition code $R_M$ and the majority vote decoder is known to drive down the failure probability of a repetition code exponentially in the number of repeats \cite{MacKay:2002:ITI:971143}.

\textbf{Remark:} Part (a) of the theorem does not describe a speedup over just using a single machine, and that might hint that all those extra $M-1$ workers are a waste in this setting. \textbf{This is not the case}. From the proof sketch above, it should be clear that part (a) is an extremely conservative statement. In particular, we expect all regions of training where the signal-to-noise ratio of the stochastic gradient satisfies $S > 1$ to enjoy a significant speedup due to variance reduction. It's just that since we don't get the speedup when $S < 1$, it's hard to express this in a compact bound.

To sketch a proof for part (b), note that a sign bit from each worker is a Bernoulli trial---call its failure probability $q$. We can get a tight control of $q$ by a convenient tail bound owing to \citet{gauss} that holds under conditions of unimodal symmetry. Then the sum of bits received by the parameter server is a binomial random variable, and we can use Cantelli's inequality to bound its tail. This turns out to be enough to get tight enough control on the error probability of the majority vote decoder to prove the theorem.

\textbf{Remark 1:} assuming that the stochastic gradient of each worker is approximately symmetric and unimodal is very reasonable. In particular for increasing mini-batch size it will be an ever-better approximation by the central limit theorem. Figure \ref{fig:symmetry} plots histograms of real stochastic gradients for neural networks. Even at batch-size 256 the stochastic gradient for an Imagenet model already looks Gaussian.

\textbf{Remark 2:} if you delve into the proof of Theorem \ref{thm:majority} and graph all of the inequalities, you will notice that some of them are uniformly slack. This suggests that the assumptions of symmetry and unimodality can actually be relaxed to only hold approximately. This raises the possibility of proving a relaxed form of Gauss' inequality and using a third moment bound in the Berry-Esseen theorem to derive a minimal batch size for which the majority vote scheme is guaranteed to work by the central limit theorem. We leave this for future work.

\textbf{Remark 3:} why does this theorem have anything to do with unimodality or symmetry at all? It's because there exist very skewed or bimodal random variables $X$ with mean $\mu$ such that $\mathbb{P}[\text{sign}(X) = \text{sign}(\mu)]$ is arbitrarily small. This can either be seen by applying Cantelli's inequality which is known to be tight, or by playing with distributions like
\begin{align*}
	\mathbb{P}[X=x] = \begin{cases}
    0.1 & \text{if } x=50\\
    0.9 & \text{if } x=-1
    \end{cases}
\end{align*}
Distributions like these are a problem because it means that adding more workers will actually drive up the error probability rather than driving it down. The beauty of the central limit theorem is that even for such a skewed and bimodal distribution, the mean of just a few tens of samples will already start to look Gaussian.
\section{Extending the Theory to \Signum{}}

Momentum is a popular trick used by neural network practitioners that can, in our experience, speed up the training of deep neural networks and improve the robustness of algorithms to other hyperparameter settings. Instead of taking steps according to the gradient, momentum algorithms take steps according to a running average of recent gradients.

Existing theoretical analyses of momentum often rely on the absence of gradient stochasticity (e.g.\ \citet{chijin}) or convexity (e.g.\ \citet{goh2017why}) to show that momentum's asymptotic convergence rate can beat gradient descent.

It is easy to incorporate momentum into \signSGD{}, merely by taking the \underline{\smash{sign}} of the moment\underline{\smash{um}} We call the resulting algorithm \Signum{} and present the algorithmic step formally in Algorithm \ref{alg:signum}. \Signum{} fits into our theoretical framework, and we prove its convergence rate in Theorem \ref{thm:signum}.

\begin{tcolorbox}[boxsep=0pt, arc=0pt,
    boxrule=0.5pt,
 colback=white]
\begin{restatable}[Convergence rate of \Signum{}]{theorem}{signumtheorem}\label{thm:signum}
In Algorithm \ref{alg:signum}, set the learning rate, mini-batch size and momentum parameter respectively as
  \begin{equation*}
  \delta_k = \frac{\delta}{\sqrt{k+1}} \qquad \qquad n_k = k+1 \qquad \qquad \beta
  \end{equation*}
 Our analysis also requires a warmup period to let the bias in the momentum settle down. The warmup should last for $C(\beta)$ iterations, where $C$ is a constant that depends on the momentum parameter $\beta$ as follows:
  \begin{align*}
  C(\beta) = \min_{C \in \mathbb{Z}^+} C \quad \text{s.t.}& \quad \frac{C}{2} \beta^C \leq \frac{1}{1-\beta^2} \frac{1}{C+1} \\
  &\quad \text{\&} \quad \beta^{C+1} \leq \frac{1}{2}
  \end{align*}
  Note that for $\beta=0.9$, we have $C=54$ which is negligible. For the first $C(\beta)$ iterations, accumulate the momentum as normal, but use the sign of the stochastic gradient to make updates instead of the sign of the momentum.\\
  
  Let $N$ be the cumulative number of stochastic gradient calls up to step $K$, i.e.\ $N = \text{O}(K^2)$. Then for $K \gg C$ we have
  \begin{align*}
        &\mathbb{E}\sq*{ \frac{1}{K-C}\sum_{k=C}^{K-1} \norm{g_k}_1}^2
    = \text{O}\left(\frac{1}{\sqrt{N}}\left[\frac{f_C - f_*}{\delta}\right.\right. \\
    &\left.\left.+ (1 + \log N) \br*{\frac{\delta\|\vec L\|_1 }{1-\beta} + \|\vec \sigma\|_1\sqrt{1-\beta} }\right]^2\right)
    \end{align*}
where we have used O$(.)$ to hide numerical constants and the $\beta$-dependent constant $C$. 
    \end{restatable}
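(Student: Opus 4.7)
The plan is to follow the blueprint of Theorem~\ref{thm:signSGD}, replacing the stochastic gradient $\tilde g_k$ by the momentum $m_{k+1}$ as the (now biased) estimator of the true gradient $g_k$. First, applying Assumption~\ref{a:coordinate_lip} to the \Signum{} update yields the one-step descent inequality $f_{k+1} - f_k \leq -\delta_k\, g_k^T\sign(m_{k+1}) + \tfrac{\delta_k^2}{2}\|\vec L\|_1$, and rewriting the inner product using $-g_k^T\sign(m_{k+1}) = -\|g_k\|_1 + 2\sum_i |g_{k,i}|\,\I\{\sign(m_{k+1,i})\neq\sign(g_{k,i})\}$ reduces the task, after taking expectations, to controlling the per-coordinate sign-failure probability $q_{k,i} := \P[\sign(m_{k+1,i})\neq\sign(g_{k,i})]$.

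The heart of the proof is to upper bound $q_{k,i}$ by a Markov-style argument, $q_{k,i} \leq \E|m_{k+1,i} - g_{k,i}|/|g_{k,i}|$, followed by a bias-variance split $\E|m_{k+1,i} - g_{k,i}| \leq |\E m_{k+1,i} - g_{k,i}| + \sqrt{\Var(m_{k+1,i})}$. Unrolling the momentum recursion with $m_0 = 0$ gives $\E m_{k+1} = (1-\beta)\sum_{j=0}^k \beta^{k-j} g_j$, so the bias splits into a residual $\beta^{k+1}g_{k,i}$ term, which the warmup condition $\beta^{C+1}\leq \tfrac12$ forces to be negligible relative to the drift, plus a drift term $(1-\beta)\sum_{j\leq k}\beta^{k-j}(g_{k,i}-g_{j,i})$. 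I would control the drift by the gradient-difference consequence of Assumption~\ref{a:coordinate_lip} together with $\|x_{l+1}-x_l\|_\infty \leq \delta_l$, then sum the geometric weights using the other half of the warmup condition to get a bias of order $\delta L_i/(1-\beta)$. For the variance, independence across stochastic gradient oracle calls gives $\Var(m_{k+1,i}) = (1-\beta)^2\sum_j \beta^{2(k-j)}\sigma_i^2/n_j$; substituting $n_j = j+1$ and bounding the geometric-arithmetic sum yields a standard deviation of order $\sigma_i\sqrt{(1-\beta)/(k+1)}$, so that $\sum_i |g_{k,i}|q_{k,i} \lesssim \delta\|\vec L\|_1/(1-\beta) + \|\vec\sigma\|_1\sqrt{(1-\beta)/(k+1)}$.

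Plugging this bound into the expected descent inequality, summing from $k=C$ to $K-1$, telescoping $\sum_k(f_k - \E f_{k+1}) \leq f_C - f_*$ via Assumption~\ref{a:lower}, and using $\sum_k\delta_k = \Theta(\delta\sqrt K)$, $\sum_k \delta_k/\sqrt{k+1} = \Theta(\delta\log K)$, and $\sum_k\delta_k^2 = \Theta(\delta^2\log K)$ produces $\sum_{k=C}^{K-1}\delta_k\,\E\|g_k\|_1 \leq (f_C-f_*) + O\!\left(\delta^2\sqrt K\,\|\vec L\|_1/(1-\beta) + \delta\log K\cdot \|\vec\sigma\|_1\sqrt{1-\beta} + \delta^2\log K\cdot\|\vec L\|_1\right)$. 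To convert the $\delta_k$-weighted sum into the uniform average in the theorem, I would use $\delta_k \geq \delta/\sqrt K$ for $k<K$, divide through by $\delta(K-C)/\sqrt K \sim \delta\sqrt K$, and finally square both sides (together with Jensen's inequality on the outer expectation) to match the stated form $\frac{1}{\sqrt N}[\tfrac{f_C-f_*}{\delta} + (1+\log N)(\tfrac{\delta\|\vec L\|_1}{1-\beta} + \|\vec\sigma\|_1\sqrt{1-\beta})]^2$ once $N=\Theta(K^2)$ is substituted; some slack is absorbed into the $(1+\log N)$ factor on the curvature term.

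The main obstacle is the bias analysis: momentum introduces a deterministic lag because $m_{k+1}$ averages gradients evaluated at stale iterates $x_j$ with $j<k$. Bounding this lag without picking up extra factors requires carefully pairing the geometric decay $\beta^{k-j}$ against the step-size growth $\sum_{l=j}^{k-1}\delta_l$, which is precisely what the two inequalities in the definition of $C(\beta)$ encode: one inequality makes the warmup long enough for the drift sum to telescope cleanly, the other makes the initial-bias residual $\beta^{C+1}$ negligible. The matching $\sqrt{1-\beta}$ factor on the variance term mirrors the classical bias-variance tradeoff of momentum: larger $\beta$ smooths more noise but amplifies lag, and the $\delta/(1-\beta)$ versus $\sqrt{1-\beta}$ split is what makes the two contributions compose into a single bound of the same $1/\sqrt N$ shape as \signSGD{}, up to the unavoidable logarithmic slack from the $\delta_k = \delta/\sqrt{k+1}$ schedule.
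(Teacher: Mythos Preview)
Your outline follows the same skeleton as the paper (one-step descent, Markov bound on the sign-failure probability, bias--variance split of $\tilde m_k - g_k$), but the bias analysis has a genuine gap that breaks the final rate. You claim a drift bias of order $\delta L_i/(1-\beta)$, \emph{constant in $k$}, and your bookkeeping confirms this: $\sum_k \delta_k\cdot\text{bias} = O(\delta^2\sqrt{K}\|\vec L\|_1/(1-\beta))$, which after dividing by $\delta\sqrt{K}$ leaves the non-vanishing term $\delta\|\vec L\|_1/(1-\beta)$. That cannot be ``absorbed into the $(1+\log N)$ factor'': the theorem asserts a rate of order $(\log N)^2/\sqrt{N}$, so every contribution on the right must go to zero. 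What is missing is that the bias itself must decay like $1/\sqrt{k}$. The paper extracts this decay from the \emph{shrinking} learning rate: over the last $t$ steps, $\sum_{l=k-t}^{k-1}\delta_l \le \int_{k-t}^k \tfrac{\delta}{\sqrt{x}}\,dx \le 2\delta t/\sqrt{k}$, so the displacement from $x_{k-t}$ to $x_k$ is $O(t/\sqrt{k})$ rather than $O(t)$. Combined with $(1-\beta)\sum_t t\beta^t = \beta/(1-\beta)$ this yields an aggregate bias $\sum_i \E|m_k[i]-g_k[i]| = O(\|\vec L\|_1\delta/((1-\beta)\sqrt{k}))$, whence $\sum_k \delta_k\xi(k) = O(\log K)$ and the stated rate follows.

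Two smaller technical issues. First, Assumption~\ref{a:coordinate_lip} does \emph{not} give the per-coordinate Lipschitz bound $|g_i(y)-g_i(x)| \le L_i\|y-x\|_\infty$ you implicitly use; from $-\diag(\vec L)\preceq H\preceq \diag(\vec L)$ one only gets $|H_{ij}|\le \sqrt{L_iL_j}$, so the row sums of $H$ need not be bounded by $L_i$. The paper instead proves an $\ell_1$-aggregate lemma specific to sign steps, $\|g(x+\epsilon s)-g(x)\|_1 \le 2\epsilon\|\vec L\|_1$, and carries the whole bias argument at the level of $\|\cdot\|_1$ rather than coordinate-wise. Second, the stochastic gradients $\tilde g_j$ are not independent across $j$ (each $x_j$ depends on all earlier noise), so your variance identity needs a martingale-difference argument rather than a bare appeal to independence; the conclusion you state is correct but the justification is not.
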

\end{tcolorbox}

The proof is the greatest technical challenge of the paper, and is given in the supplementary material. We focus on presenting the proof in a modular form, anticipating that parts may be useful in future theoretical work. It involves a very general master lemma, Lemma \ref{lem:master}, which can be used to help prove all the theorems in this paper. 

\textbf{Remark 1:} switching optimisers after a warmup period is in fact commonly done by practitioners \cite{15min}.

\textbf{Remark 2:} the theory suggests that momentum can be used to control a bias-variance tradeoff in the quality of stochastic gradient estimates. Sending $\beta \rightarrow 1$ kills the variance term in $\|\vec \sigma\|_1$ due to averaging gradients over a longer time horizon. But averaging in stale gradients induces bias due to curvature of $f(x)$, and this blows up the $\delta \|\vec L\|_1$ term.

\textbf{Remark 3:} for generality, we state this theorem with a tunable learning rate $\delta$. For variety, we give this theorem in any-time form with a growing batch size and decaying learning rate. This comes at the cost of $\log$ factors appearing.

\begin{figure}[t]
\vskip 0.2in
\begin{center}
  \centerline{\includegraphics[width=\columnwidth]{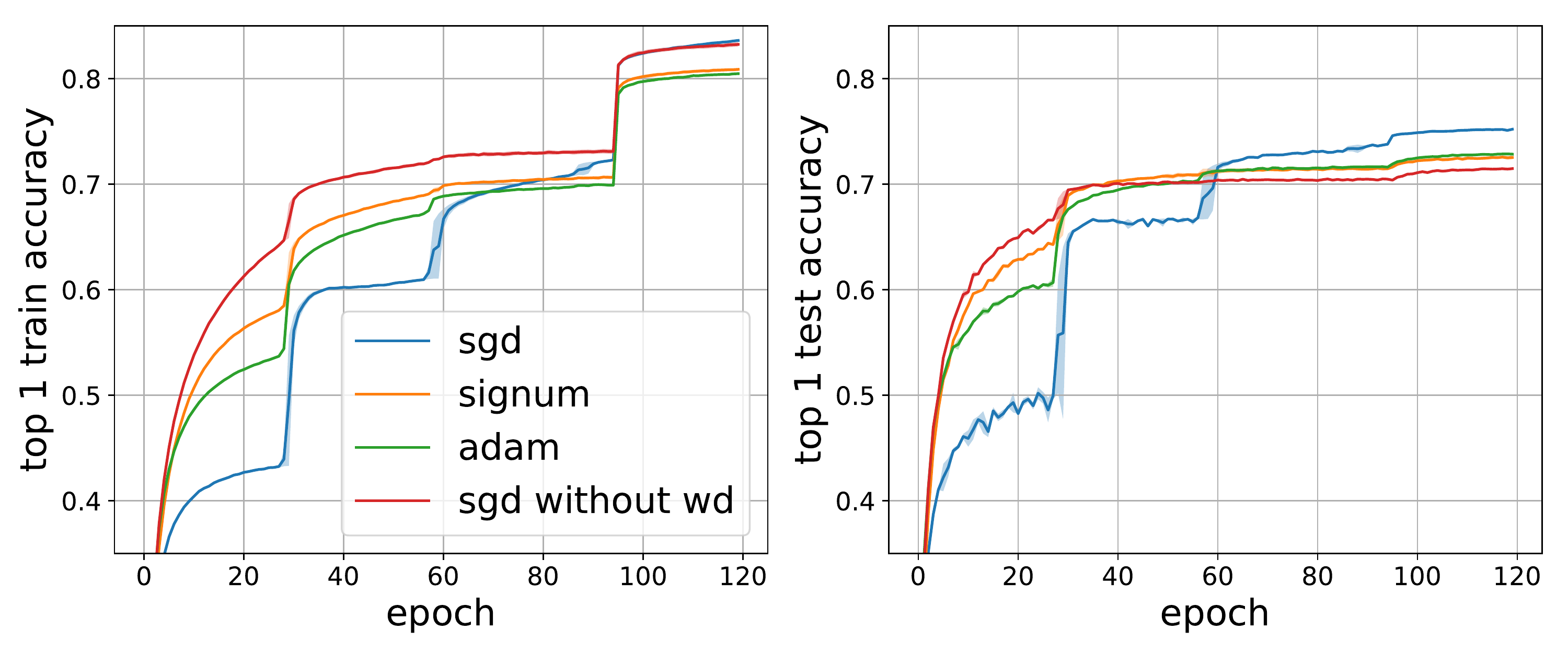}}
\caption{Imagenet train and test accuracies using the momentum version of \signSGD{}, called \Signum{}, to train Resnet-50 v2. We based our implementation on an open source implementation by \url{github.com/tornadomeet}. Initial learning rate and weight decay were tuned on a separate validation set split off from the training set and all other hyperparameters were chosen to be those found favourable for \SGD{} by the community. There is a big jump at epoch 95 when we switch off data augmentation. \Signum{} gets test set performance approximately the same as \Adam{}, better than \SGD{} with out weight decay, but about 2\% worse than \SGD{} with a well-tuned weight decay.
}
\label{fig:imagenet}
\end{center}
\vskip -0.2in
\end{figure}

We benchmark \Signum{} on Imagenet (Figure \ref{fig:imagenet}) and CIFAR-10 (Figure \ref{fig:cifar-test} of supplementary). The full results of a giant hyperparameter grid search for the CIFAR-10 experiments are also given in the supplementary. \Signum{}'s performance rivals \Adam{}'s in all experiments.
\section{Discussion}

Gradient compression schemes like \TernGrad{} \cite{wen2017terngrad} quantise gradients into three levels $\{0,\pm1\}$. This is desirable when the ternary quantisation is sparse, since it can allow further compression. Our scheme of majority vote should easily be compatible with a ternary quantisation---in both directions of communication. This can be
cast as ``majority vote with abstention''. The scheme is as follows: workers send their vote to the parameter server, unless they are very unsure about the sign of the true gradient in which case they send zero. The parameter-server counts the votes, and if quorum is not reached (i.e.\ too many workers disagreed or abstained) the parameter-server sends back zero. This extended algorithm should readily fit into our theory.

In Section \ref{sec:related} we pointed out that \signSGD{} and \Signum{} are closely related to \Adam{}.  In all our experiments we find that \Signum{} and \Adam{} have very similar performance, although both lose out to \SGD{} by about 2\% test accuracy on Imagenet. \citet{marginal}  observed that \Adam{} tends to generalise slightly worse than \SGD{}. Though it is still unclear why this is the case,
perhaps it could be because we don't know how to properly regularise such methods. Whilst we found that neither standard weight decay nor the suggestion of \citet{hutter} completely closed our Imagenet test set gap with \SGD{}, it is possible that some other regularisation scheme might. One idea, suggested by our theory, is that \signSGD{} could be squashing down noise levels. There is some evidence \cite{smithbayes} that a certain level of noise can be good for generalisation, biasing the optimiser towards wider valleys in the objective function. Perhaps, then, adding Gaussian noise to the \Signum{} update might help it generalise better. This can be achieved in a communication efficient manner in the distributed setting by sharing a random seed with each worker, and then generating the same noise on each worker. 


Finally, in Section \ref{sec:theorysection} we discuss some geometric implications of our theory, and provide an efficient and robust experimental means of measuring one aspect---the ratio between noise and gradient density---through the Welford algorithm. We believe that since this density ratio is easy to measure, it may  be useful to help guide those doing architecture search, to find network architectures which are amenable to fast training through gradient compression schemes.
\section{Conclusion}

We have presented a general framework for studying sign-based methods in stochastic non-convex optimisation. We present non-vacuous bounds for gradient compression schemes, and elucidate the special $\ell_1$ geometries under which these schemes can be expected to succeed. Our theoretical framework is broad enough to handle signed-momentum schemes---like \Signum{}---and also multi-worker distributed schemes---like majority vote.

Our work touches upon interesting aspects of the geometry of high-dimensional error surfaces, which we wish to explore in future work. But the next step for us will be to reach out to members of the distributed systems community to help benchmark the majority vote algorithm which shows such great theoretical promise for 1-bit compression in both directions between parameter-server and workers.

\section*{Acknowledgments}

The authors are grateful to the anonymous reviewers for their helpful comments, as well as Jiawei Zhao, Michael Tschannen, Julian Salazar, Tan Nguyen, Fanny Yang, Mu Li, Aston Zhang and Zack Lipton for useful discussions. Thanks to Ryan Tibshirani for pointing out the connection to steepest descent.

KA is supported in part by NSF Career Award CCF-1254106 and Air Force FA9550-15-1-0221. AA is supported in part by Microsoft Faculty Fellowship, Google Faculty Research Award, Adobe Grant, NSF Career Award CCF-1254106, and AFOSR YIP FA9550-15-1-0221.

\bibliography{refs}

\begin{thebibliography}{44}
\providecommand{\natexlab}[1]{#1}
\providecommand{\url}[1]{\texttt{#1}}
\expandafter\ifx\csname urlstyle\endcsname\relax
  \providecommand{\doi}[1]{doi: #1}\else
  \providecommand{\doi}{doi: \begingroup \urlstyle{rm}\Url}\fi

\bibitem[Akiba et~al.(2017)Akiba, Suzuki, and Fukuda]{15min}
Akiba, T., Suzuki, S., and Fukuda, K.
\newblock {Extremely Large Minibatch SGD: Training ResNet-50 on ImageNet in 15
  Minutes}.
\newblock \emph{arXiv:1711.04325}, 2017.

\bibitem[Alistarh et~al.(2017)Alistarh, Grubic, Li, Tomioka, and
  Vojnovic]{QSGD}
Alistarh, D., Grubic, D., Li, J., Tomioka, R., and Vojnovic, M.
\newblock {{QSGD}: Communication-Efficient {SGD} via Gradient Quantization and
  Encoding}.
\newblock In \emph{Advances in neural information processing systems
  (NIPS-17)}, 2017.

\bibitem[Allen-Zhu(2017{\natexlab{a}})]{allen-zhu_natasha:_2017}
Allen-Zhu, Z.
\newblock Natasha: {Faster} {Non}-{Convex} {Stochastic} {Optimization} via
  {Strongly} {Non}-{Convex} {Parameter}.
\newblock In \emph{International Conference on Machine Learning (ICML-17)},
  2017{\natexlab{a}}.

\bibitem[Allen-Zhu(2017{\natexlab{b}})]{allen-zhu_natasha_2017}
Allen-Zhu, Z.
\newblock {Natasha 2: Faster Non-Convex Optimization Than SGD}.
\newblock \emph{arXiv:1708.08694}, 2017{\natexlab{b}}.

\bibitem[Balles \& Hennig(2017)Balles and Hennig]{balles2018dissecting}
Balles, L. and Hennig, P.
\newblock {Dissecting Adam: The Sign, Magnitude and Variance of Stochastic
  Gradients}.
\newblock \emph{arXiv:1705.07774}, 2017.

\bibitem[Boyd \& Vandenberghe(2004)Boyd and Vandenberghe]{boyd2004convex}
Boyd, S. and Vandenberghe, L.
\newblock \emph{Convex Optimization}.
\newblock Cambridge University Press, 2004.

\bibitem[Cantelli(1928)]{cantelli}
Cantelli, F.~P.
\newblock Sui confini della probabilità.
\newblock \emph{Atti del Congresso Internazionale dei Matematici}, 1928.

\bibitem[Carlson et~al.(2016)Carlson, Hsieh, Collins, Carin, and Cevher]{ssd}
Carlson, D., Hsieh, Y., Collins, E., Carin, L., and Cevher, V.
\newblock Stochastic spectral descent for discrete graphical models.
\newblock \emph{IEEE Journal of Selected Topics in Signal Processing},
  10\penalty0 (2):\penalty0 296--311, 2016.

\bibitem[Dauphin et~al.(2014)Dauphin, Pascanu, Gulcehre, Cho, Ganguli, and
  Bengio]{dauphin_identifying_2014}
Dauphin, Y.~N., Pascanu, R., Gulcehre, C., Cho, K., Ganguli, S., and Bengio, Y.
\newblock Identifying and {Attacking} the {Saddle} {Point} {Problem} in
  {High}-{D}imensional {Non}-{C}onvex {Optimization}.
\newblock In \emph{Advances in neural information processing systems
  (NIPS-14)}, 2014.

\bibitem[Duchi et~al.(2011)Duchi, Hazan, and Singer]{adagrad}
Duchi, J., Hazan, E., and Singer, Y.
\newblock Adaptive subgradient methods for online learning and stochastic
  optimization.
\newblock \emph{Journal of Machine Learning Research}, 12:\penalty0 2121--2159,
  2011.

\bibitem[Gauss(1823)]{gauss}
Gauss, C.~F.
\newblock Theoria combinationis observationum erroribus minimis obnoxiae, pars
  prior.
\newblock \emph{Commentationes Societatis Regiae Scientiarum Gottingensis
  Recentiores}, 1823.

\bibitem[Ghadimi \& Lan(2013)Ghadimi and Lan]{ghadimi2013stochastic}
Ghadimi, S. and Lan, G.
\newblock Stochastic first-and zeroth-order methods for nonconvex stochastic
  programming.
\newblock \emph{SIAM Journal on Optimization}, 23\penalty0 (4):\penalty0
  2341--2368, 2013.

\bibitem[Glorot \& Bengio(2010)Glorot and Bengio]{pmlr-v9-glorot10a}
Glorot, X. and Bengio, Y.
\newblock {Understanding the Difficulty of Training Deep Feedforward Neural
  Networks}.
\newblock In \emph{Artificial intelligence and statistics (AISTATS-10)}, 2010.

\bibitem[Goh(2017)]{goh2017why}
Goh, G.
\newblock {Why Momentum Really Works}.
\newblock \emph{Distill}, 2017.

\bibitem[Goodfellow et~al.(2015)Goodfellow, Vinyals, and
  Saxe]{goodfellow_qualitatively_2015}
Goodfellow, I.~J., Vinyals, O., and Saxe, A.~M.
\newblock {Qualitatively Characterizing Neural Network Optimization Problems}.
\newblock In \emph{International Conference on Learning Representations
  (ICLR-15)}, 2015.

\bibitem[Goyal et~al.(2017)Goyal, Doll{\'a}r, Girshick, Noordhuis, Wesolowski,
  Kyrola, Tulloch, Jia, and He]{Goyal2017AccurateLM}
Goyal, P., Doll{\'a}r, P., Girshick, R.~B., Noordhuis, P., Wesolowski, L.,
  Kyrola, A., Tulloch, A., Jia, Y., and He, K.
\newblock {Accurate, Large Minibatch SGD: Training ImageNet in 1 Hour}.
\newblock \emph{arXiv:1706.02677}, 2017.

\bibitem[He et~al.(2016{\natexlab{a}})He, Zhang, Ren, and Sun]{he2016deep}
He, K., Zhang, X., Ren, S., and Sun, J.
\newblock Deep residual learning for image recognition.
\newblock In \emph{IEEE Conference on Computer Vision and Pattern Recognition
  (CVPR-16)}, pp.\  770--778, 2016{\natexlab{a}}.

\bibitem[He et~al.(2016{\natexlab{b}})He, Zhang, Ren, and Sun]{resnetv2}
He, K., Zhang, X., Ren, S., and Sun, J.
\newblock Identity mappings in deep residual networks.
\newblock In \emph{European Conference on Computer Vision (ECCV-16)}, pp.\
  630--645. Springer, 2016{\natexlab{b}}.

\bibitem[Jin et~al.(2017)Jin, Netrapalli, and Jordan]{chijin}
Jin, C., Netrapalli, P., and Jordan, M.~I.
\newblock {Accelerated Gradient Descent Escapes Saddle Points Faster than
  Gradient Descent}.
\newblock \emph{arXiv:1711.10456}, 2017.

\bibitem[Karimi et~al.(2016)Karimi, Nutini, and Schmidt]{Karimi2016LinearCO}
Karimi, H., Nutini, J., and Schmidt, M.
\newblock Linear convergence of gradient and proximal-gradient methods under
  the {Polyak-{\l}ojasiewicz} condition.
\newblock In \emph{European Conference on Machine Learning and Principles and
  Practice of Knowledge Discovery in Databases (ECML-PKDD-16)}, pp.\  795--811.
  Springer, 2016.

\bibitem[Kingma \& Ba(2015)Kingma and Ba]{kingma_adam:_2015}
Kingma, D.~P. and Ba, J.
\newblock Adam: {A} {Method} for {Stochastic} {Optimization}.
\newblock In \emph{International Conference on Learning Representations
  (ICLR-15)}, 2015.

\bibitem[Knuth(1997)]{knuth}
Knuth, D.~E.
\newblock \emph{The Art of Computer Programming, Volume 2 (3rd Ed.):
  Seminumerical Algorithms}.
\newblock Addison-Wesley Longman Publishing Co., Inc., 1997.

\bibitem[Krizhevsky(2009)]{Krizhevsky09learningmultiple}
Krizhevsky, A.
\newblock {Learning Multiple Layers of Features from Tiny Images}.
\newblock Technical report, University of Toronto, 2009.

\bibitem[LeCun et~al.(2015)LeCun, Bengio, and Hinton]{mafia}
LeCun, Y., Bengio, Y., and Hinton, G.
\newblock Deep learning.
\newblock \emph{Nature}, 521\penalty0 (7553):\penalty0 436, 2015.

\bibitem[Li et~al.(2014)Li, Andersen, Park, Smola, Ahmed, Josifovski, Long,
  Shekita, and Su]{muli}
Li, M., Andersen, D.~G., Park, J.~W., Smola, A.~J., Ahmed, A., Josifovski, V.,
  Long, J., Shekita, E.~J., and Su, B.-Y.
\newblock {Scaling Distributed Machine Learning with the Parameter Server}.
\newblock In \emph{Symposium on Operating Systems Design and Implementation
  (OSDI-14)}, pp.\  583--598, 2014.

\bibitem[Loshchilov \& Hutter(2017)Loshchilov and Hutter]{hutter}
Loshchilov, I. and Hutter, F.
\newblock {Fixing Weight Decay Regularization in Adam}.
\newblock \emph{arXiv:1711.05101}, 2017.

\bibitem[MacKay(2002)]{MacKay:2002:ITI:971143}
MacKay, D. J.~C.
\newblock \emph{Information Theory, Inference \& Learning Algorithms}.
\newblock Cambridge University Press, 2002.

\bibitem[Nesterov(2013)]{nesterov2013introductory}
Nesterov, Y.
\newblock \emph{{Introductory Lectures on Convex Optimization: A Basic
  Course}}.
\newblock Springer, 2013.

\bibitem[Nesterov \& Polyak(2006)Nesterov and Polyak]{nesterov_cubic_2006}
Nesterov, Y. and Polyak, B.
\newblock {Cubic Regularization of {Newton} Method and its Global Performance}.
\newblock \emph{Mathematical Programming}, 2006.

\bibitem[Pukelsheim(1994)]{threesigma}
Pukelsheim, F.
\newblock {The Three Sigma Rule}.
\newblock \emph{The American Statistician}, 1994.

\bibitem[Reddi et~al.(2018)Reddi, Kale, and Kumar]{adam-non-converge}
Reddi, S.~J., Kale, S., and Kumar, S.
\newblock {On the Convergence of Adam and Beyond}.
\newblock In \emph{International Conference on Learning Representations
  (ICLR-18)}, 2018.

\bibitem[Richt{\'a}rik \& Tak{\'a}{\v{c}}(2014)Richt{\'a}rik and
  Tak{\'a}{\v{c}}]{richtarik2014iteration}
Richt{\'a}rik, P. and Tak{\'a}{\v{c}}, M.
\newblock Iteration complexity of randomized block-coordinate descent methods
  for minimizing a composite function.
\newblock \emph{Mathematical Programming}, 144\penalty0 (1-2):\penalty0 1--38,
  2014.

\bibitem[Riedmiller \& Braun(1993)Riedmiller and Braun]{riedmiller_direct_1993}
Riedmiller, M. and Braun, H.
\newblock {A Direct Adaptive Method for Faster Backpropagation Learning: the
  {RPROP} Algorithm}.
\newblock In \emph{International Conference on Neural Networks (ICNN-93)}, pp.\
   586--591. IEEE, 1993.

\bibitem[Robbins \& Monro(1951)Robbins and Monro]{robbins1951}
Robbins, H. and Monro, S.
\newblock {A Stochastic Approximation Method}.
\newblock \emph{The Annals of Mathematical Statistics}, 1951.

\bibitem[Russakovsky et~al.(2015)Russakovsky, Deng, Su, Krause, Satheesh, Ma,
  Huang, Karpathy, Khosla, Bernstein, et~al.]{ILSVRC15}
Russakovsky, O., Deng, J., Su, H., Krause, J., Satheesh, S., Ma, S., Huang, Z.,
  Karpathy, A., Khosla, A., Bernstein, M., et~al.
\newblock Imagenet large scale visual recognition challenge.
\newblock \emph{International Journal of Computer Vision}, 115\penalty0
  (3):\penalty0 211--252, 2015.

\bibitem[Schmidhuber(2015)]{youagain}
Schmidhuber, J.
\newblock {Deep Learning in Neural Networks: an Overview}.
\newblock \emph{Neural Networks}, 2015.

\bibitem[Seide et~al.(2014)Seide, Fu, Droppo, Li, and Yu]{seide_1-bit_2014}
Seide, F., Fu, H., Droppo, J., Li, G., and Yu, D.
\newblock 1-{Bit} {Stochastic} {Gradient} {Descent} and {Application} to
  {Data}-{Parallel} {Distributed} {Training} of {Speech} {DNNs}.
\newblock In \emph{Conference of the International Speech Communication
  Association (INTERSPEECH-14)}, 2014.

\bibitem[Smith \& Le(2018)Smith and Le]{smithbayes}
Smith, S.~L. and Le, Q.~V.
\newblock {A Bayesian Perspective on Generalization and Stochastic Gradient
  Descent}.
\newblock In \emph{International Conference on Learning Representations
  (ICLR-18)}, 2018.

\bibitem[Strom(2015)]{strom2015scalable}
Strom, N.
\newblock {Scalable distributed DNN training using commodity GPU cloud
  computing}.
\newblock In \emph{Conference of the International Speech Communication
  Association (INTERSPEECH-15)}, 2015.

\bibitem[Suresh et~al.(2017)Suresh, Yu, Kumar, and McMahan]{mcmahan}
Suresh, A.~T., Yu, F.~X., Kumar, S., and McMahan, H.~B.
\newblock {Distributed Mean Estimation with Limited Communication}.
\newblock In \emph{International Conference on Machine Learning (ICML-17)},
  2017.

\bibitem[Tieleman \& Hinton(2012)Tieleman and Hinton]{tieleman_rmsprop_2012}
Tieleman, T. and Hinton, G.
\newblock {RMSprop}.
\newblock \emph{Coursera: Neural Networks for Machine Learning}, Lecture 6.5,
  2012.

\bibitem[Welford(1962)]{welford}
Welford, B.~P.
\newblock {Note on a Method for Calculating Corrected Sums of Squares and
  Products}.
\newblock \emph{Technometrics}, 1962.

\bibitem[Wen et~al.(2017)Wen, Xu, Yan, Wu, Wang, Chen, and Li]{wen2017terngrad}
Wen, W., Xu, C., Yan, F., Wu, C., Wang, Y., Chen, Y., and Li, H.
\newblock {TernGrad: Ternary Gradients to Reduce Communication in Distributed
  Deep Learning}.
\newblock In \emph{Advances in neural information processing systems
  (NIPS-17)}, 2017.

\bibitem[Wilson et~al.(2017)Wilson, Roelofs, Stern, Srebro, and
  Recht]{marginal}
Wilson, A.~C., Roelofs, R., Stern, M., Srebro, N., and Recht, B.
\newblock {The Marginal Value of Adaptive Gradient Methods in Machine
  Learning}.
\newblock In \emph{Advances in neural information processing systems
  (NIPS-17)}, 2017.

\end{thebibliography}
\bibliographystyle{icml/icml2018}
\newpage
\onecolumn
\appendix
\renewcommand\dblfloatpagefraction{.9} 
\renewcommand\topfraction{.9}
\renewcommand\dbltopfraction{.9} 
\renewcommand\bottomfraction{.9}
\renewcommand\textfraction{.1}   

\section{Further experimental results}
\begin{figure}[tbh]
\begin{center}
\centerline{\includegraphics[width=0.6\textwidth]{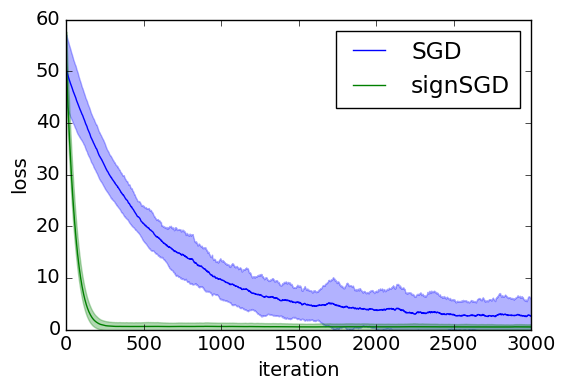}}
\caption{A simple toy problem where \signSGD{} converges faster than \SGD{}. The objective function is just a quadratic $f(x) = \frac{1}{2} x^2$ for $x\in\mathbb{R}^{100}$. The gradient of this function is just $g(x)=x$. We construct an artificial stochastic gradient by adding Gaussian noise $\mathcal{N}(0,100^2)$ to only the first component of the gradient. Therefore the noise is extremely sparse. The initial point is sampled from a unit variance spherical Gaussian. For each algorithm we tune a separate, constant learning rate finding 0.001 best for \SGD{} and 0.01 best for \signSGD{}. \signSGD{} appears more robust to the sparse noise in this problem. Results are averaged over 50 repeats with $\pm 1$ standard deviation shaded.}
\label{fig:sparse_noise}
\end{center}

\vskip 0.2in
\begin{center}
\centerline{\includegraphics[width=0.6\textwidth]{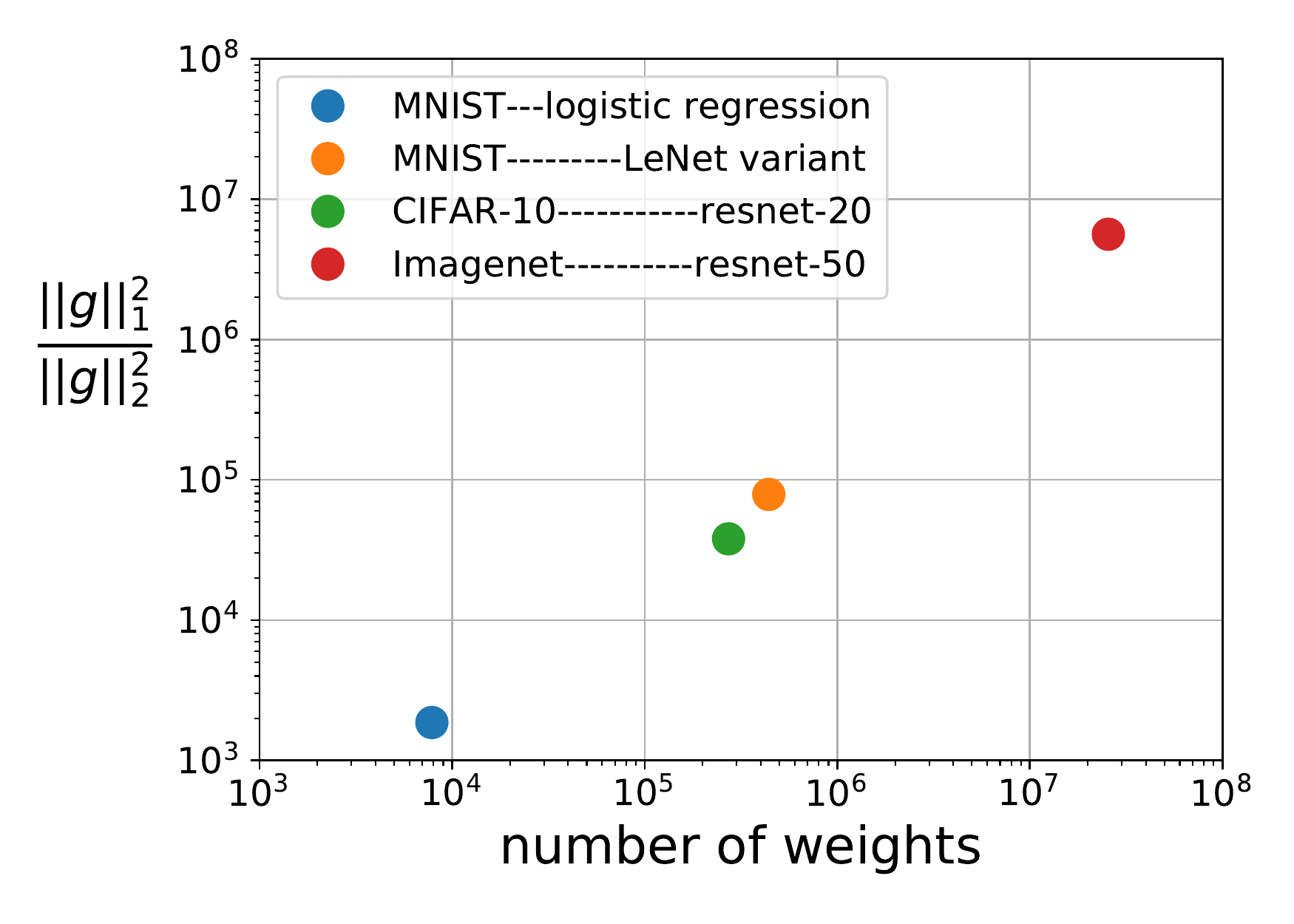}}
\caption{Measuring gradient density via ratio of norms, over a range of datasets and architectures. For each network, we take a point in parameter space provided by the Xavier initialiser \cite{pmlr-v9-glorot10a}. We do a full pass over the data to compute the full gradient at this point. It is remarkably dense in all cases.}
\label{fig:grad-dense}
\end{center}
\vskip -0.2in
\end{figure}

\begin{figure*}[ht]
\vskip 0.2in
\begin{center}
\centerline{\includegraphics[width=0.9\textwidth]{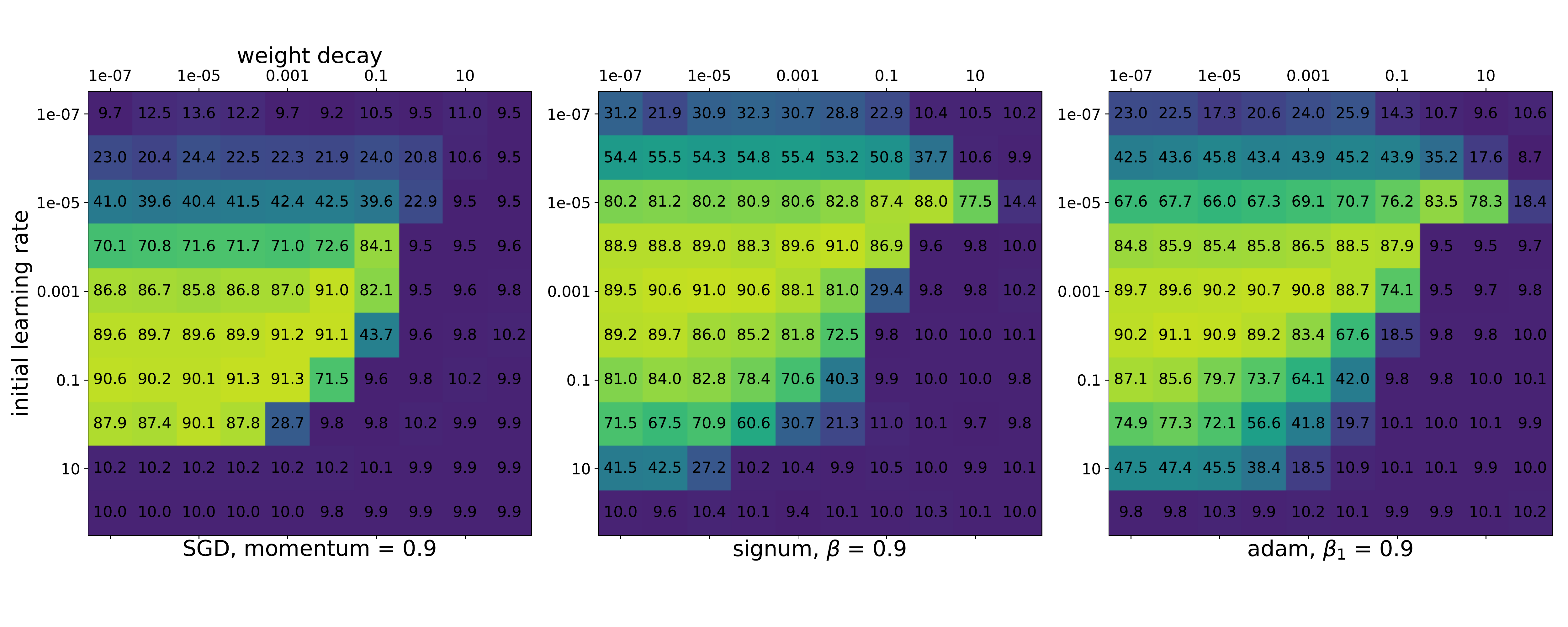}}
\centerline{\includegraphics[width=0.9\textwidth]{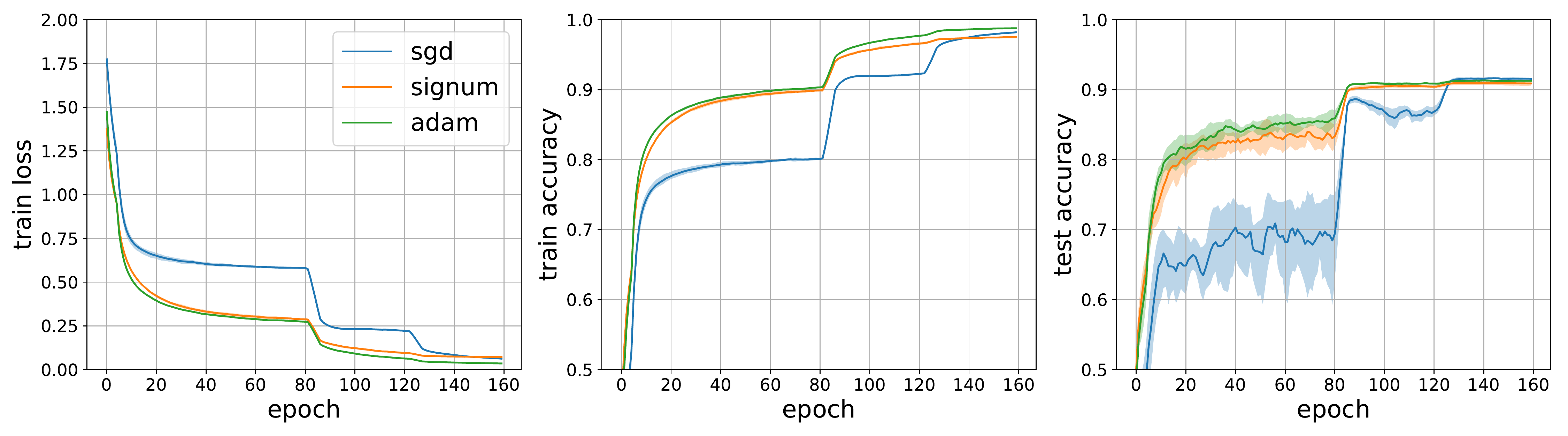}}
\caption{CIFAR-10 results using \Signum{} to train a Resnet-20 model. Top: validation accuracies from a hyperparameter sweep on a separate validation set carved out from the training set. We used this to tune initial learning rate, weight decay and momentum for all algorithms. All other hyperparameter settings were chosen as in \cite{he2016deep} as found favourable for \SGD{}. The hyperparameter sweep for other values of momentum is plotted in Figure \ref{fig:cifar-grid} of the supplementary. Bottom: there is little difference between the final test set performance of the algorithms. \Signum{} closely resembles \Adam{} in all of these plots. 
}
\label{fig:cifar-test}
\end{center}
\vskip -0.2in
\end{figure*}

\begin{figure*}[p]
    \centering
$\overbrace{\qquad\qquad\qquad\qquad\qquad\qquad\qquad\qquad\qquad\qquad\qquad\qquad\qquad\qquad\qquad\qquad\qquad\qquad\qquad}^{\text{\SGD{}}}$

\includegraphics[width=0.85\textwidth]{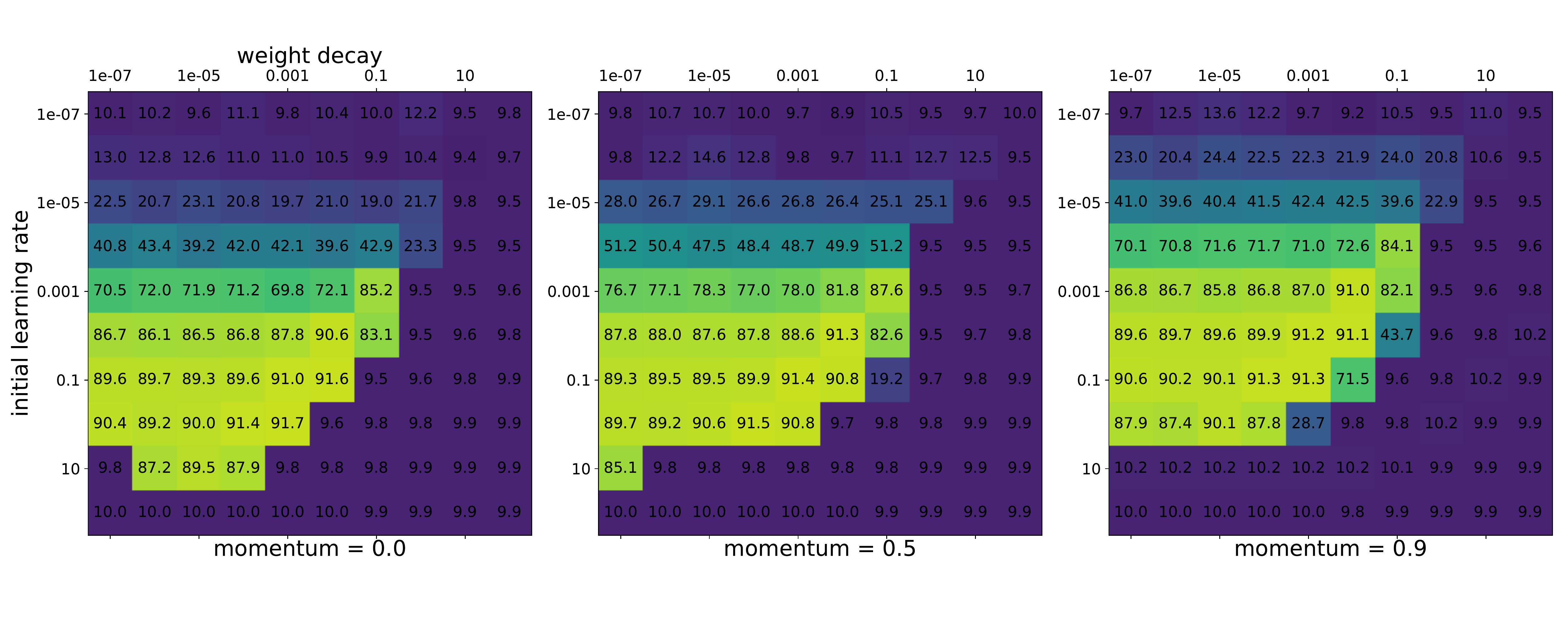}
$\overbrace{\qquad\qquad\qquad\qquad\qquad\qquad\qquad\qquad\qquad\qquad\qquad\qquad\qquad\qquad\qquad\qquad\qquad\qquad\qquad}^{\text{\signSGD{}}}$
 \includegraphics[width=0.85\textwidth]{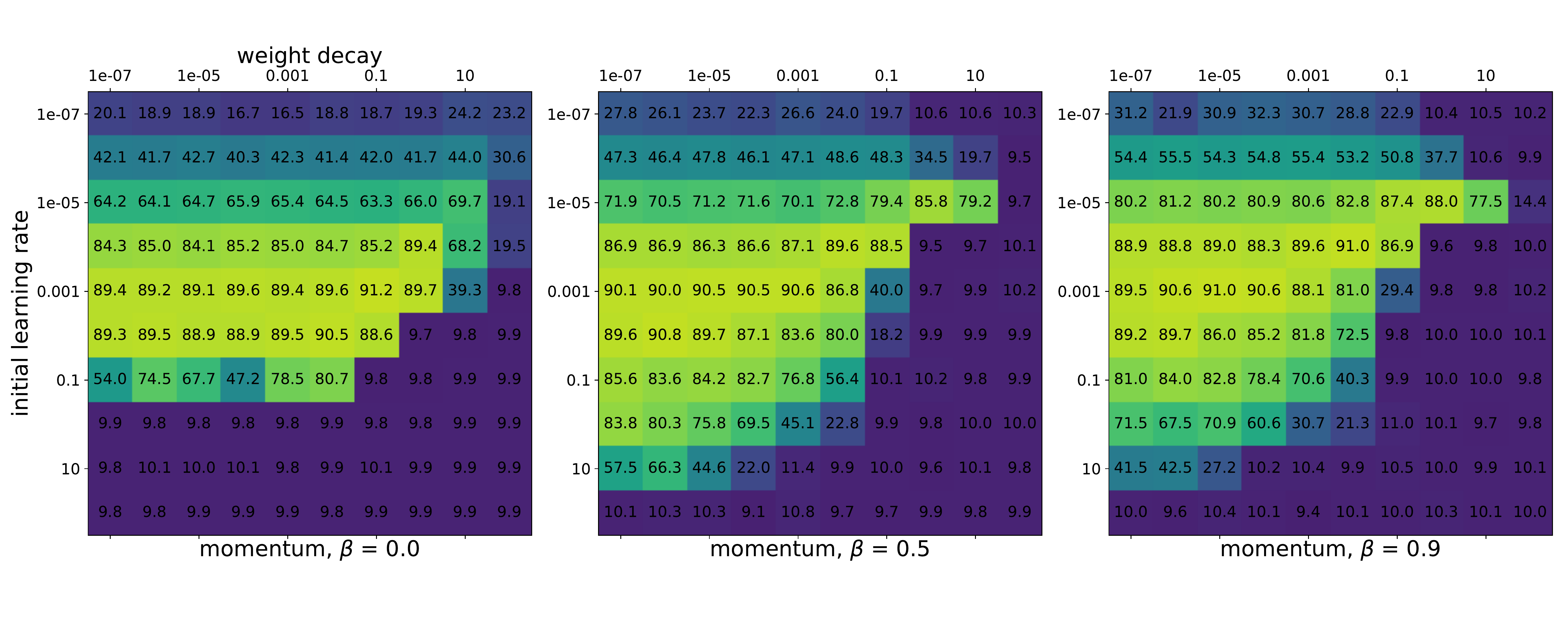}
$\overbrace{\qquad\qquad\qquad\qquad\qquad\qquad\qquad\qquad\qquad\qquad\qquad\qquad\qquad\qquad\qquad\qquad\qquad\qquad\qquad}^{\text{\Adam{}}}$
  \includegraphics[width=0.85\textwidth]{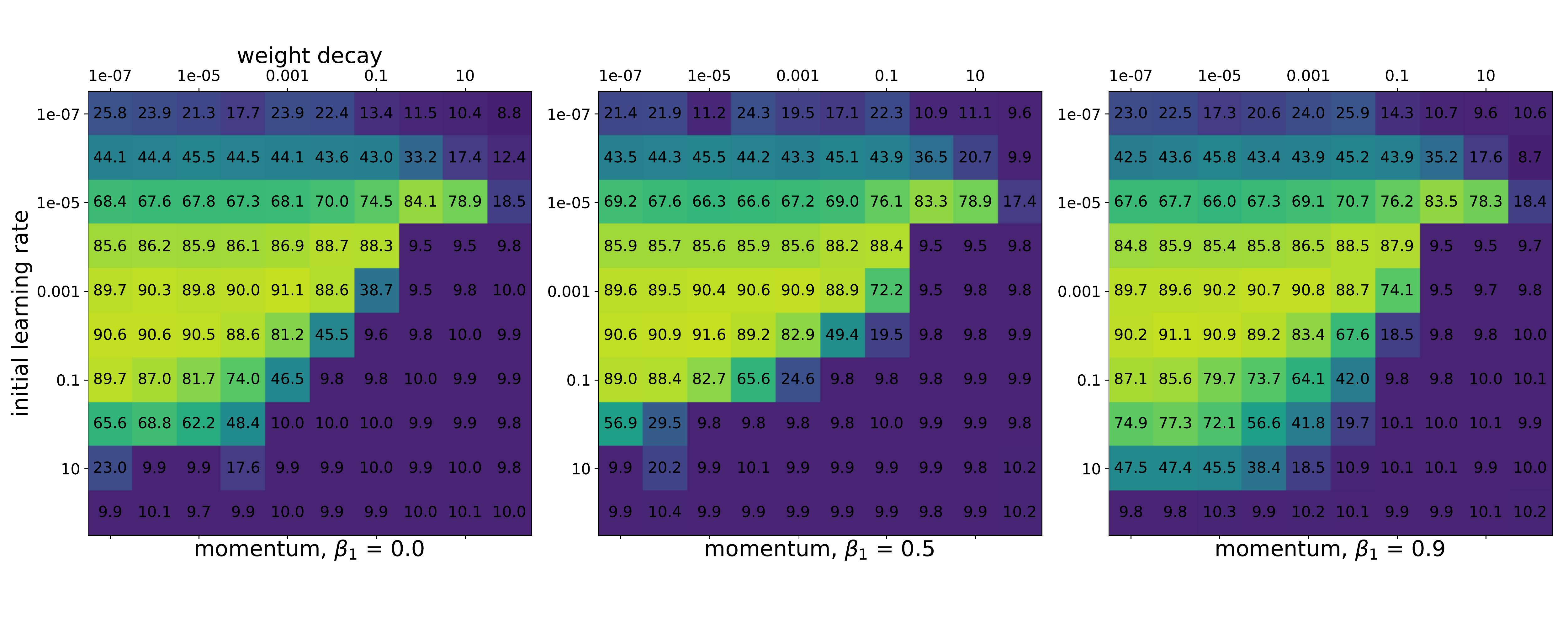}
            
    \caption{Results of a massive grid search over hyperparameters for training Resnet-20 \citep{he2016deep} on CIFAR-10 \citep{Krizhevsky09learningmultiple}. All non-algorithm specific hyperparameters (such as learning rate schedules) were set as in \cite{he2016deep}. In \Adam{}, $\beta_2$ and $\epsilon$ were chosen as recommended in \cite{kingma_adam:_2015}.
Data was divided according to a \{45k/5k/10k\} \{train/val/test\} split. Validation accuracies are plotted above, and the best performer on the validation set was chosen for the final test run (shown in Figure \ref{fig:cifar-test}). All algorithms at the least get close to the baseline reported in \citep{he2016deep} of 91.25\%. Note the broad similarity in general shape of the heatmap between \Adam{} and \signSGD{}, supporting a notion of algorithmic similarity. Also note that whilst \SGD{} has a larger region of very high-scoring hyperparameter configurations, \signSGD{} and \Adam{} appear stable over a larger range of learning rates. Notice that the \SGD{} heat map shifts up for increasing momentum, since the implementation of SGD in the mxnet deep learning framework actually couples the momentum and learning rate parameters.}\label{fig:cifar-grid}
\end{figure*}
\clearpage


\section{Proving the convergence rate of \signSGD{}}
\label{app:sign}

  \begin{tcolorbox}[boxsep=0pt, arc=0pt,
    boxrule=0.5pt,
 colback=white]
\signSGDtheorem*
\end{tcolorbox}

\begin{proof}
First let's bound the improvement of the objective during a single step of the algorithm for one instantiation of the noise. $\mathbb{I}\sq{.}$ is the indicator function, $g_{k,i}$ denotes the $i^{th}$ component of the true gradient $g(x_k)$ and $\tilde{g}_k$ is a stochastic sample obeying Assumption \ref{a:hoeffding}.
  
  First take Assumption \ref{a:coordinate_lip}, plug in the step from Algorithm \ref{alg:signSGD}, and decompose the improvement to expose the stochasticity-induced error:
  \begin{align*}
  f_{k+1}-f_k &\leq g_k^T (x_{k+1}-x_k) + \sum_{i=1}^d \frac{L_i}{2}(x_{k+1} - x_k)_i^2 \\
    &= - \delta_k g_k^T \text{sign}(\tilde{g}_k) + \delta_k^2\sum_{i=1}^d\frac{L_i}{2} \\
    &= - \delta_k \norm{g_k}_1 + \frac{\delta_k^2}{2}\|\vec L\|_1 \\
    & + 2 \delta_k \sum_{i=1}^d |g_{k,i}|\, \mathbb{I}\sq{\text{sign}(\tilde{g}_{k,i}) \neq \text{sign}(g_{k,i})}
  \end{align*}
  
  Next we find the expected improvement at time $k+1$ conditioned on the previous iterate.
  \begin{align*}
  \mathbb{E}\sq{f_{k+1}-f_k|x_k} &\leq - \delta_k \norm{g_k}_1 +\frac{\delta_k^2}{2}\|\vec L\|_1 \\
  & + 2 \delta_k \sum_{i=1}^d |g_{k,i}|\, \mathbb{P}\sq{\text{sign}(\tilde{g}_{k,i}) \neq \text{sign}(g_{k,i})}
  \end{align*}
  
  So the expected improvement crucially depends on the probability that each component of the sign vector is correct, which is intuitively controlled by the relative scale of the gradient to the noise. To make this rigorous, first relax the probability, then use Markov's inequality followed by Jensen's inequality:
  \begin{align*}
  \mathbb{P}\sq{\text{sign}(\tilde{g}_{k,i}) \neq \text{sign}(g_{k,i})} &\leq   \mathbb{P}\sq{\abs{\tilde{g}_{k,i}-g_{k,i}} \geq \abs{g_{k,i}}}\\
	& \leq \frac{\mathbb{E}\sq{\abs{\tilde{g}_{k,i}-g_{k,i}}}}{\abs{g_{k,i}}}\\
	& \leq \frac{\sqrt{\mathbb{E}\sq{(\tilde{g}_{k,i}-g_{k,i})^2}}}{\abs{g_{k,i}}}\\
	&= \frac{\sigma_{k,i}}{\abs{g_{k,i}}}
  \end{align*}
  
  $\sigma_{k,i}$ refers to the variance of the $k^{th}$ stochastic gradient estimate, computed over a mini-batch of size $n_k$. Therefore, by Assumption \ref{a:hoeffding}, we have that $\sigma_{k,i} \leq \sigma_i/\sqrt{n_k}$.
  
    
  We now substitute these results and our learning rate and mini-batch settings into the expected improvement:
  \begin{align*}
  \mathbb{E}\sq{f_{k+1}-f_k|x_k} &\leq - \delta_k \norm{g_k}_1 + 2 \frac{\delta_k}{\sqrt{n_k}} \|\vec\sigma\|_1 +\frac{\delta_k^2}{2}\|\vec L\|_1 \\
  &= - \frac{1}{\sqrt{\Lnorm K}} \norm{g_k}_1 + \frac{2}{\sqrt{\Lnorm} K}\|\vec\sigma\|_1 + \frac{1}{2K}
  \end{align*}
  
  Now extend the expectation over randomness in the trajectory, and perform a telescoping sum over the iterations:
  
  \begin{align*}
    f_0 - f^* &\geq f_0 - \mathbb{E}\sq{f_K} \\
    &= \mathbb{E}\sq*{\sum_{k=0}^{K-1}f_k - f_{k+1}} \\
    &\geq \mathbb{E}\sum_{k=0}^{K-1}\sq*{\frac{1}{\sqrt{\Lnorm K}} \norm{g_k}_1 - \frac{1}{2\sqrt{\Lnorm}K} \br*{4\|\sigma\|_1 + \sqrt{\|\vec L\|_1}} }\\
    &= \sqrt{\frac{K}{\Lnorm}}\mathbb{E}\sq*{\frac{1}{K}\sum_{k=0}^{K-1}\norm{g_k}_1} - \frac{1}{2\sqrt{\Lnorm}} \br*{4\|\vec\sigma\|_1 + \sqrt{\|\vec L\|_1}} \\
  \end{align*}
  
  We can rearrange this inequality to yield the rate:
    
    \begin{align*}
    \mathbb{E}\sq*{\frac{1}{K} \sum_{k=0}^{K-1} \norm{g_k}_1} \leq \frac{1}{\sqrt{K}}\sq*{\sqrt{\Lnorm}\br*{f_0 - f_* + \frac{1}{2}} + 2\|\vec\sigma\|_1}
    \end{align*}
    
    Since we are growing our mini-batch size, it will take $N = O(K^2)$ gradient calls to reach step $K$. Substitute this in, square the result, and we are done.
\end{proof}


\section{Large and small batch \SGD{}}
\label{app:sgdtheory}

\begin{algorithm}[h!]
   \caption{\SGD{}}
   \label{alg:SGD}
\begin{algorithmic}
   \STATE {\bfseries Input:} learning rate $\delta$, current point $x_k$
   \STATE $\tilde{g}_k \leftarrow \mathrm{stochasticGradient}(x_k)$
   \STATE $x_{k+1} \leftarrow x_k - \delta \, \tilde{g}_k$
\end{algorithmic}
\end{algorithm}

For comparison with \signSGD{} theory, here we present non-convex convergence rates for \SGD{}. These are classical results and we are not sure of the earliest reference.

We noticeably get exactly the same rate for large and small batch \SGD{} when measuring convergence in terms of number of stochastic gradient calls. Although the rates are the same for a given number of gradient calls $N$, the large batch setting is preferred (in theory) since it achieves these $N$ gradient calls in only $\sqrt{N}$ iterations, whereas the small batch setting requires $N$ iterations. Fewer iterations in the large batch case implies a smaller wall-clock time to reach a given accuracy (assuming the large batch can be parallelised) as well as fewer rounds of communication in the distributed setting. These systems benefits of large batch learning have been observed by practitioners \cite{Goyal2017AccurateLM}.\\

\begin{tcolorbox}[boxsep=0pt, arc=0pt,
    boxrule=0.5pt,
 colback=white]
\begin{restatable}[Non-convex convergence rate of
\SGD{}]{theorem}{largeSGDtheorem}\label{thm:largeSGD}
Run algorithm \ref{alg:SGD} for $K$ iterations under Assumptions 1 to 3. Define $L := \|L\|_\infty$ and $\sigma^2 := \|\vec \sigma \|_2^2$. Set the learning rate and mini-batch size (independently of step $k$)  as either
  \begin{align*}
  \text{\textbf{(large batch)}} \qquad & \delta_k = \frac{1}{L} & n_k = K \\
  \text{\textbf{(small batch)}} \qquad & \delta_k = \frac{1}{L\sqrt{K}} & n_k = 1
  \end{align*}
  Let   $N$ be the cumulative number of stochastic gradient calls up to step $K$, i.e.\  $N = \text{O}(K^2)$ for large batch and $N = \text{O}(K)$ for small batch. Then, in either case, we have
   \begin{align*}
    &\mathbb{E}\sq*{\frac{1}{K}\sum_{k=0}^{K-1}\norm{g_k}_2^2} \leq \frac{1}{\sqrt{N}}\sq*{2L\br*{f_0 - f_*} + \sigma^2}
    \end{align*}
 
\end{restatable}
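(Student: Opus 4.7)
The plan is to follow the classical non-convex \SGD{} analysis, mirroring the structure of the \signSGD{} proof above but replacing the sign-specific argument with a direct second-moment calculation. First I would coarsen Assumption~\ref{a:coordinate_lip} into its isotropic consequence $|f(y)-f(x)-g(x)^T(y-x)|\leq \frac{L}{2}\|y-x\|_2^2$ with $L := \|\vec L\|_\infty$, plug in the \SGD{} update $x_{k+1}=x_k-\delta_k\tilde g_k$, and take conditional expectation given $x_k$. Using unbiasedness $\mathbb{E}[\tilde g_k\mid x_k]=g_k$ together with the mini-batch variance bound $\mathbb{E}[\|\tilde g_k-g_k\|_2^2\mid x_k]\leq \sigma^2/n_k$ (which is Assumption~\ref{a:hoeffding} averaged over $n_k$ independent oracle calls and then summed over coordinates), the cross term collapses to $-\delta_k\|g_k\|_2^2$ while the quadratic term expands into $\frac{L\delta_k^2}{2}(\|g_k\|_2^2 + \sigma^2/n_k)$. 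This is the one-step descent inequality I will work with.

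Next I would substitute the two prescribed schedules into this descent inequality. In the large-batch case $\delta_k=1/L$, $n_k=K$, the coefficient on $\|g_k\|_2^2$ simplifies to $-1/(2L)$ and the noise residual per step becomes $\sigma^2/(2LK)$. In the small-batch case $\delta_k=1/(L\sqrt K)$, $n_k=1$, the coefficient on $\|g_k\|_2^2$ is bounded above by $-1/(2L\sqrt K)$ for $K\geq 1$ (absorbing the positive $1/(2LK)$ term into half of $-1/(L\sqrt K)$ via $1/\sqrt K\leq 1$), and the noise residual per step is again $\sigma^2/(2LK)$. The design of each learning rate is exactly what balances the gradient and curvature terms.

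Then I would telescope from $k=0$ to $K-1$, apply Assumption~\ref{a:lower} via $f_0-\mathbb{E}[f_K]\leq f_0-f^*$, and rearrange. The large-batch inequality rearranges to $\mathbb{E}[\frac{1}{K}\sum_k\|g_k\|_2^2] \leq \frac{2L(f_0-f^*)+\sigma^2}{K}$, while the small-batch inequality gives the same right-hand side with $K$ replaced by $\sqrt K$. Converting iterations to gradient calls finishes the job: in the large-batch regime each of $K$ iterations costs $K$ samples so $N=\Theta(K^2)$ and $K=\Theta(\sqrt N)$, whereas in the small-batch regime $N=K$ directly and $\sqrt K=\sqrt N$. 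Both regimes therefore collapse onto the single bound $\frac{1}{\sqrt N}[2L(f_0-f^*)+\sigma^2]$.

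This is a classical result and there is no genuinely hard step. The only places needing care are verifying the mini-batch variance reduction from $\sigma^2$ to $\sigma^2/n_k$ (which leans on the independence clause in Assumption~\ref{a:hoeffding}), checking that the small-batch coefficient on $\|g_k\|_2^2$ is indeed $\leq -1/(2L\sqrt K)$ so that the descent step remains a descent step, and carefully bookkeeping the $K \leftrightarrow N$ conversion so the large- and small-batch rates genuinely coincide when expressed in $N$.
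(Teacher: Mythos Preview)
Your proposal is correct and follows essentially the same approach as the paper: coarsen Assumption~\ref{a:coordinate_lip} to the isotropic $L$-smoothness, derive the one-step descent inequality $\mathbb{E}[f_{k+1}-f_k\mid x_k]\leq -\delta_k\|g_k\|_2^2+\tfrac{L\delta_k^2}{2}(\sigma_k^2+\|g_k\|_2^2)$, substitute each schedule (using $1/\sqrt{K}\leq 1$ to absorb the extra $\|g_k\|_2^2$ term in the small-batch case), telescope, and convert $K$ to $N$. The paper's proof is organised in the same order with the same intermediate bounds.
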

\end{tcolorbox}

\begin{proof}
The proof begins the same for the large and small batch case. 

First we bound the improvement of the objective during a single step of the algorithm for one instantiation of the noise. $g_{k}$ denotes the true gradient at step $k$ and $\tilde{g}_k$ is a stochastic sample obeying Assumption \ref{a:hoeffding}.
  
  Take Assumption \ref{a:coordinate_lip} and plug in the algorithmic step.
  \begin{align*}
  f_{k+1}-f_k &\leq g_k^T (x_{k+1}-x_k) + \sum_{i=1}^d \frac{L_i}{2}(x_{k+1} - x_k)_i^2 \\
  &\leq g_k^T (x_{k+1}-x_k) + \frac{\|L\|_\infty}{2} \|x_{k+1} - x_k\|_2^2 \\
    &= - \delta_k g_k^T \tilde{g}_k + \delta_k^2 \frac{L}{2} \|\tilde{g}_k\|_2^2 \\
  \end{align*}
  
  Next we find the expected improvement at time $k+1$ conditioned on the previous iterate.
  \begin{align*}
  \mathbb{E}\sq{f_{k+1}-f_k|x_k} &\leq - \delta_k \norm{g_k}_2^2 +\delta_k^2\frac{L}{2} \left( \sigma_k^2 + \norm{g_k}_2^2\right).
  \end{align*}

  $\sigma_{k}^2$ refers to the variance of the $k^{th}$ stochastic gradient estimate, computed over a mini-batch of size $n_k$. Therefore, by Assumption \ref{a:hoeffding}, we have that $\sigma_k^2 \leq \sigma^2/n_k$. 
  
  First let's substitute in the \textbf{(large batch)} hyperparameters.   \begin{align*}
  \mathbb{E}\sq{f_{k+1}-f_k|x_k} &\leq - \frac{1}{L} \norm{g_k}_2^2 +\frac{1}{2L} \left( \frac{\sigma^2}{K} + \norm{g_k}_2^2\right) \\
  &= - \frac{1}{2L} \norm{g_k}_2^2 +\frac{1}{2L} \frac{\sigma^2}{K}.
  \end{align*}
  
  Now extend the expectation over randomness in the trajectory, and perform a telescoping sum over the iterations:
  
  \begin{align*}
    f_0 - f^* &\geq f_0 - \mathbb{E}\sq{f_K} \\
    &= \mathbb{E}\sq*{\sum_{k=0}^{K-1}f_k - f_{k+1}} \\
    &\geq \frac{1}{2L} \mathbb{E}\sum_{k=0}^{K-1}\sq*{\norm{g_k}_2^2 -\sigma^2}\\
  \end{align*}
  We can rearrange this inequality to yield the rate:
    \begin{align*}
    \mathbb{E}\sq*{\frac{1}{K}\sum_{k=0}^{K-1} \norm{g_k}_2^2} \leq \frac{1}{K}\sq*{2L\br*{f_0 - f_*} + \sigma^2}.
    \end{align*}
    
    Since we are growing our mini-batch size, it will take $N = O(K^2)$ gradient calls to reach step $K$. Substitute this in and we are done for the \textbf{(large batch)} case.

  Now we need to show that the same result holds for the \textbf{(small batch)} case. Following the initial steps of the large batch proof, we get
  \begin{align*}
  \mathbb{E}\sq{f_{k+1}-f_k|x_k} &\leq - \delta_k \norm{g_k}_2^2 +\delta_k^2\frac{L}{2} \left( \sigma_k^2 + \norm{g_k}_2^2\right).
  \end{align*}
  This time $\sigma_{k}^2 = \sigma^2$. Substituting this and our learning rate and mini-batch settings into the expected improvement:
  \begin{align*}
  \mathbb{E}\sq{f_{k+1}-f_k|x_k} &\leq - \frac{1}{L\sqrt{K}} \norm{g_k}_2^2 +\frac{1}{2LK} \left( \sigma^2 + \norm{g_k}_2^2\right) \\
  &\leq - \frac{1}{2L\sqrt{K}} \norm{g_k}_2^2 +\frac{1}{2L} \frac{\sigma^2}{K}.
  \end{align*}
  Now extend the expectation over randomness in the trajectory, and perform a telescoping sum over the iterations:
  \begin{align*}
    f_0 - f^* &\geq f_0 - \mathbb{E}\sq{f_K} \\
    &= \mathbb{E}\sq*{\sum_{k=0}^{K-1}f_k - f_{k+1}} \\
    &\geq \frac{1}{2L} \mathbb{E}\sum_{k=0}^{K-1}\sq*{\frac{\norm{g_k}_2^2}{\sqrt{K}} -\frac{\sigma^2}{K}}.
  \end{align*}
  We can rearrange this inequality to yield the rate:
    \begin{align*}
    \mathbb{E}\sq*{\frac{1}{K}\sum_{k=0}^{K-1} \norm{g_k}_2^2} \leq \frac{1}{\sqrt{K}}\sq*{2L\br*{f_0 - f_*} + \sigma^2}
    \end{align*}
    It will take $N = O(K)$ gradient calls to reach step $K$. Substitute this in and we are done.
\end{proof}




\section{Proving the convergence rate of distributed \signSGD{} with majority vote}
\label{app:majority}

  \begin{tcolorbox}[boxsep=0pt, arc=0pt,
    boxrule=0.5pt,
 colback=white]
\majoritytheorem*
\end{tcolorbox}
Before we introduce the unimodal symmetric assumption, let's first address the claim that M-worker majority vote is at least as good as single-worker \signSGD{} as in Theorem \ref{thm:signSGD} only using Assumptions 1 to 3.
\begin{proof}[Proof of (a)]
Recall that a crucial step in Theorem \ref{thm:signSGD} is showing that
\begin{equation*}
	|g_{i}|\, \mathbb{P}\sq{\text{sign}(\tilde{g}_{i}) \neq \text{sign}(g_{i})} \leq \sigma_{i}
\end{equation*}
for component $i$ of the stochastic gradient with variance bound $\sigma_i$.

The only difference in majority vote is that instead of using $\text{sign}(\tilde{g_i})$ to approximate $\text{sign}(g_i)$, we are instead using $\text{sign}\sq*{\sum_{m=1}^M \text{sign}(\tilde{g}_{m,i})}$. If we can show that the same bound in terms of $\sigma_{i}$ holds instead for \begin{equation*}
	|g_{i}|\, \mathbb{P}\sq*{\text{sign}\sq*{\sum_{m=1}^M \text{sign}(\tilde{g}_{m,i})} \neq \text{sign}(g_{i})} \tag{$\star$}
\end{equation*}
then we are done, since the machinery of Theorem 1 can then be directly applied.

Define the signal-to-noise ratio of a component of the stochastic gradient as $\SNR := |g_i|/\sigma_i$. Note that when $\SNR \leq 1$ then $(\star)$ is trivially satisfied, so we need only consider the case that $\SNR > 1$. $S$ should really be labeled $S_i$ but we abuse notation.

Without loss of generality, assume that $g_{i}$ is negative, and thus using Assumption \ref{a:hoeffding} and Cantelli's inequality \cite{cantelli} we get that for the failure probability $q$ of a single worker

\begin{equation*}
q:=\mathbb{P}[\text{sign}(\tilde{g}_i)\neq\text{sign}(g_i)] = \mathbb{P}[\tilde{g}_i - g_i\geq|g_i|] \leq \frac{1}{1 + \frac{g_i^2}{\sigma_i^2}}
\end{equation*}

For $\SNR > 1$ then we have failure probability $q<\frac{1}{2}$. If the failure probability of a single worker is smaller than $\frac{1}{2}$ then the server is essentially receiving a repetition code $R_M$ of the true gradient sign. Majority vote is the maximum likelihood decoder of the repetition code, and of course decreases the probability of error---see e.g.\ \cite{MacKay:2002:ITI:971143}. Therefore in all regimes of $\SNR$ we have that
\begin{align*}
(\star) \leq |g_{i}|\, \mathbb{P}\sq{\text{sign}(\tilde{g}_{i}) \neq \text{sign}(g_{i})} \leq \sigma_{i}
\end{align*}
and we are done.
\end{proof}

That's all well and good, but what we'd really like to show is that using $M$ workers provides a speedup by reducing the variance. Is
\begin{align*}
(\star) \stackrel{?}{\leq} \frac{\sigma_i}{\sqrt{M}} \tag{$\dagger$}\end{align*}
too much to hope for?

Well in the regime where $\SNR \gg 1$ such a speedup is very reasonable since $q \ll \frac{1}{2}$ by Cantelli, and the repetition code actually supplies exponential reduction in failure rate. But we need to exclude very skewed or bimodal distributions where $q>\frac{1}{2}$ and adding more voting workers will not help. That brings us naturally to the following lemma:

\begin{lemma}[Failure probability of a sign bit under conditions of unimodal symmetric gradient noise]\label{lem:symm}

Let $\tilde{g}_i$ be an unbiased stochastic approximation to gradient component $g_i$, with variance bounded by $\sigma_i^2$. Further assume that the noise distribution is unimodal and symmetric. Define signal-to-noise ratio $S:= \frac{|g_i|}{\sigma_i}$. Then we have that
\begin{align*}
\mathbb{P}[\text{sign}(\tilde{g}_i)\neq\text{sign}(g_i)] 
&\leq \begin{cases}
\frac{2}{9}\frac{1}{S^2} & \quad \text{if } S > \frac{2}{\sqrt{3}},\\
\frac{1}{2}-\frac{S}{2\sqrt{3}} & \quad \text{otherwise}
\end{cases}
\end{align*}
which is in all cases less than $\frac{1}{2}$.
\end{lemma}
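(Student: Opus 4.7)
The plan is to reduce the sign-error event to a two-sided deviation event by exploiting symmetry, then bound that two-sided deviation via Gauss's inequality (the result on unimodal distributions cited earlier in the paper as the \citet{gauss} theorem). Without loss of generality assume $g_i > 0$, so a sign error occurs precisely when $\tilde{g}_i \leq 0$, i.e., when $\tilde{g}_i - g_i \leq -|g_i|$. Because $\tilde{g}_i$ has mean $g_i$ and the noise $\tilde{g}_i - g_i$ is unimodal and symmetric about $0$, the mode coincides with the mean, and
\begin{equation*}
\mathbb{P}[\tilde{g}_i - g_i \leq -|g_i|] = \tfrac{1}{2}\,\mathbb{P}\bigl[|\tilde{g}_i - g_i| \geq |g_i|\bigr].
\end{equation*}

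Next I would invoke Gauss's inequality, which states that for a unimodal random variable with mode equal to the mean and variance $\tau^2$,
\begin{equation*}
\mathbb{P}\bigl[|X - \mathbb{E}X| \geq k\bigr] \;\leq\; \begin{cases} \tfrac{4}{9}\,\tau^2/k^2 & \text{if } k \geq \tfrac{2\tau}{\sqrt{3}},\\[2pt] 1 - k/(\tau\sqrt{3}) & \text{otherwise.} \end{cases}
\end{equation*}
Setting $k = |g_i|$ and using $\tau^2 \leq \sigma_i^2$, the threshold $k \geq \tfrac{2\tau}{\sqrt{3}}$ translates exactly into the condition $S > \tfrac{2}{\sqrt{3}}$ in the lemma. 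Plugging in and multiplying by the factor $\tfrac{1}{2}$ from the symmetry step yields $\tfrac{2}{9S^2}$ in the high-SNR regime and $\tfrac{1}{2} - \tfrac{S}{2\sqrt{3}}$ in the low-SNR regime, matching the two branches of the claim.

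Finally, I would verify that the bound is always less than $\tfrac{1}{2}$: in the low-SNR branch this is immediate since $S > 0$, and in the high-SNR branch $\tfrac{2}{9S^2} < \tfrac{2}{9}\cdot\tfrac{3}{4} = \tfrac{1}{6} < \tfrac{1}{2}$. The only mild subtlety is that the variance bound gives $\tau \leq \sigma_i$ rather than equality, so I would check that Gauss's inequality is monotone in $\tau$ within each branch, allowing us to replace $\tau$ by $\sigma_i$ safely (both branches are increasing in $\tau$ on the relevant range), and that the branch threshold is handled consistently under this replacement.

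The main obstacle I anticipate is not the computation itself but justifying the symmetry reduction cleanly when the noise variance is merely bounded (not exactly $\sigma_i^2$); I expect this to go through because Gauss's inequality holds for the true variance $\tau^2$ and the bound is monotone in $\tau$ on each branch, so tightening $\tau$ up to $\sigma_i$ only weakens the inequality. Once that is handled, the rest is a direct two-line application of Gauss followed by the $\tfrac{1}{2}$ symmetry factor.
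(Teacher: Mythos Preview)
Your proposal is correct and follows essentially the same route as the paper: reduce the sign-error event to a two-sided tail via symmetry, then apply Gauss's inequality with the mode equal to the mean. You even go slightly further than the paper by flagging the $\tau \le \sigma_i$ subtlety and the branch-threshold consistency, which the paper's proof glosses over by tacitly treating $\sigma_i$ as the exact standard deviation.
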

\begin{proof}
Recall Gauss' inequality for unimodal random variable X with mode $\nu$ and expected squared deviation from the mode $\tau^2$ \cite{gauss,threesigma}:
\begin{equation*}
\mathbb{P}[|X-\nu| > k] \leq \begin{cases}
\frac{4}{9}\frac{\tau^2}{k^2} & \quad \text{if } \frac{k}{\tau}> \frac{2}{\sqrt{3}},\\
1-\frac{k}{\sqrt{3}\tau} & \quad \text{otherwise}
\end{cases}
\end{equation*}
By the symmetry assumption, the mode is equal to the mean, so we replace mean $\mu = \nu$ and variance $\sigma^2 = \tau^2$.
\begin{equation*}
\mathbb{P}[|X-\mu| > k] \leq \begin{cases}
\frac{4}{9}\frac{\sigma^2}{k^2} & \quad \text{if } \frac{k}{\sigma}> \frac{2}{\sqrt{3}},\\
1-\frac{k}{\sqrt{3}\sigma} & \quad \text{otherwise}
\end{cases}
\end{equation*}
Without loss of generality assume that $g_i$ is negative. Then applying symmetry followed by Gauss, the failure probability for the sign bit satisfies:
\begin{align*}
\mathbb{P}[\text{sign}(\tilde{g}_i)\neq\text{sign}(g_i)] &= \mathbb{P}[\tilde{g}_i - g_i\geq|g_i|] \\
&=\frac{1}{2}\mathbb{P}[|\tilde{g}_i - g_i|\geq|g_i|] \\
&\leq \begin{cases}
\frac{2}{9}\frac{\sigma_i^2}{g_i^2} & \quad \text{if } \frac{|g_i|}{\sigma}> \frac{2}{\sqrt{3}},\\
\frac{1}{2}-\frac{|g_i|}{2\sqrt{3}\sigma_i} & \quad \text{otherwise}
\end{cases}\\
&= \begin{cases}
\frac{2}{9}\frac{1}{S^2} & \quad \text{if } S > \frac{2}{\sqrt{3}},\\
\frac{1}{2}-\frac{S}{2\sqrt{3}} & \quad \text{otherwise}
\end{cases}
\end{align*}
\end{proof}

We now have everything we need to prove part (b) of Theorem \ref{thm:majority}.

\begin{proof}[Proof of (b)]
If we can show $(\dagger)$ we'll be done, since the machinery of Theorem \ref{thm:signSGD} follows through with $\sigma$ replaced everywhere by $\frac{\sigma}{\sqrt{M}}$. Note that the important quantity appearing in $(\star)$ is 
\begin{equation*}
\sum_{m=1}^M \text{sign}(\tilde{g}_{m,i}).
\end{equation*}
Let $Z$ count the number of workers with correct sign bit. To ensure that 
\begin{equation*}
	\text{sign}\sq*{\sum_{m=1}^M \text{sign}(\tilde{g}_{m,i})} = \text{sign}(g_{i})
\end{equation*}
$Z$ must be larger than $\frac{M}{2}$. But $Z$ is the sum of $M$ independent Bernoulli trials, and is therefore binomial with success probability $p$ and failure probability $q$ to be determined. Therefore we have reduced proving $(\dagger)$ to showing that
\begin{equation*}
	\mathbb{P}\sq*{Z \leq \frac{M}{2}} \leq \frac{1}{\sqrt{M}S} \tag{$\mathsection$}
\end{equation*}
where $Z$ is the number of successes of a binomial random variable $b(M,p)$ and $S$ is our signal-to-noise ratio $S:=\frac{|g_i|}{\sigma_i}$.

Let's start by getting a bound on the success probability $p$ (or equivalently failure probability $q$) of a single Bernoulli trial. 

By Lemma \ref{lem:symm}, which critically relies on unimodal symmetric gradient noise, the failure probability for the sign bit of a single worker satisfies:
\begin{align*}
q&:=\mathbb{P}[\text{sign}(\tilde{g}_i)\neq\text{sign}(g_i)] \\
&\leq \begin{cases}
\frac{2}{9}\frac{1}{S^2} & \quad \text{if } S > \frac{2}{\sqrt{3}},\\
\frac{1}{2}-\frac{S}{2\sqrt{3}} & \quad \text{otherwise}
\end{cases}\\
&:= \tilde{q}(S)
\end{align*}
Where we have defined $\tilde{q}(S)$ to be our $S$-dependent bound on $q$. Since $q \leq \tilde{q}(S) < \frac{1}{2}$, there is hope to show $(\dagger)$. Define $\epsilon$ to be the defect of $q$ from one half, and let $\tilde{\epsilon}(S)$ be its $S$-dependent bound.
\begin{equation*}
\epsilon := \frac{1}{2} - q = p - \frac{1}{2} \geq \frac{1}{2} - \tilde{q}(S) := \tilde{\epsilon}(S)
\end{equation*}

Now we have an analytical handle on random variable $Z$, we may proceed to show $(\mathsection)$. There are a number of different inequalities that we can use to bound the tail of a binomial random variable, but Cantelli's inequality will be good enough for our purposes. 

Let $\bar{Z} := M - Z$ denote the number of failures. $\bar{Z}$ is binomial with mean $\mu_{\bar{Z}} = Mq$ and variance $\sigma^2_{\bar{Z}} = M p q$. Then using Cantelli we get
\begin{align*}
	\mathbb{P}\sq*{Z \leq \frac{M}{2}} &= 	\mathbb{P}\sq*{\bar{Z} \geq \frac{M}{2}}\\
    &= 	\mathbb{P}\sq*{\bar{Z}-\mu_{\bar{Z}} \geq \frac{M}{2}-\mu_{\bar{Z}}}\\
    &= 	\mathbb{P}\sq*{\bar{Z}-\mu_{\bar{Z}} \geq M \epsilon}\\ 
    &\leq \frac{1}{1+ \frac{M^2\epsilon^2}{Mpq}}\\
    &\leq \frac{1}{1+ \frac{M}{\frac{1}{4\epsilon^2}-1}}
\end{align*}
Now using the fact that $\frac{1}{1+x^2} \leq \frac{1}{2x}$ we get
\begin{align*}
	\mathbb{P}\sq*{Z \leq \frac{M}{2}} &\leq \frac{\sqrt{\frac{1}{4\epsilon^2}-1}}{2\sqrt{M}}
\end{align*}

To finish, we need only show that $\sqrt{\frac{1}{4\epsilon^2}-1}$ is smaller than $\frac{2}{S}$, or equivalently that its square is smaller than $\frac{4}{S^2}$. Well plugging in our bound on $\epsilon$ we get that
\begin{align*}
\frac{1}{4\epsilon^2}-1 &\leq \frac{1}{4\tilde{\epsilon}(S)^2}-1 
\end{align*}
where
\begin{equation*}
\tilde{\epsilon}(S) = \begin{cases}
\frac{1}{2} - \frac{2}{9}\frac{1}{S^2} & \quad \text{if } S > \frac{2}{\sqrt{3}},\\
\frac{S}{2\sqrt{3}} & \quad \text{otherwise}
\end{cases}
\end{equation*}
First take the case $S \leq \frac{2}{\sqrt{3}}$. Then $\tilde{\epsilon}^2 = \frac{S^2}{12}$ and $\frac{1}{4\tilde{\epsilon}^2}-1 = \frac{3}{S^2} - 1 < \frac{4}{S^2}$. Now take the case $S > \frac{2}{\sqrt{3}}$. Then $\tilde{\epsilon} = \frac{1}{2} - \frac{2}{9}\frac{1}{S^2}$ and we have $\frac{1}{4\tilde{\epsilon}^2}-1 = \frac{1}{S^2} \frac{\frac{8}{9} - \frac{16}{81}\frac{1}{S^2}}{1 -\frac{8}{9}\frac{1}{S^2} + \frac{16}{81}\frac{1}{S^4}} < \frac{1}{S^2}\frac{\frac{8}{9}}{1-\frac{8}{9}\frac{1}{S^2}}<\frac{4}{S^2}$ by the condition on $S$.

So we have shown both cases, which proves $(\mathsection)$ from which we get $(\dagger)$ and we are done.
\end{proof}

\section{General recipes for the convergence of approximate sign gradient methods}
\label{app:proof}

Now we generalize the arguments in the proof of \signSGD{} and prove a master lemma that provides a general recipe for analyzing the approximate sign gradient method. This allows us to handle momentum and the majority voting schemes, hence proving Theorem~\ref{thm:signum} and Theorem~\ref{thm:majority}.


\begin{lemma}[Convergence rate for a class of approximate sign gradient method]\label{lem:master}
Let $C$ and $K$ be integers satisfying $0 < C \ll K$. Consider the algorithm given by $x_{k+1} = x_k - \delta_k \text{sign}(v_k)$,
	for a fixed positive sequence of $\delta_k$ and
	where $v_k \in \R^d$ is a measurable and square integrable function of the entire history up to time $k$, including $x_1,...,x_k,  v_1,...,v_{k-1}$ and all $N_k$ stochastic gradient oracle calls up to time $k$.
	Let $g_k =  \nabla f(x_k)$.
	If  Assumption~\ref{a:lower} and Assumption~\ref{a:coordinate_lip} are true and in addition for $k=C,C+1,C+2,...,K$
    \begin{equation}\label{eq:masterbound_new}
    \E\left[\sum_{i=1}^d|g_{k,i}|\P[\text{sign}(v_{k,i}) \neq \text{sign}(g_{k,i})| x_k ]\right] \leq \xi(k) 
    \end{equation}
	where the expectation is taken over the all random variables, and the rate $\xi(k)$ obeys that $\xi(k) \rightarrow 0$ as $k\rightarrow \infty$ and  then we have
	$$
	 \frac{1}{K-C}\sum_{k=C}^{K-1}\E\|g_k\|_1  \leq  \frac{f_C  - f_*  + 2\sum_{k=C}^{K-1}\delta_k \xi(k)  +  \sum_{k=C}^{K-1}\frac{\delta_k^2\|\vec L\|_1}{2}}{(K-C)\min_{C\leq k \leq K-1}\delta_k }.
	$$
	In particular, if $\delta_k = \delta/\sqrt{k}$ and $\xi(k)  = \kappa/\sqrt{k}$, for some problem dependent constant $\kappa$, then we have 
	$$\frac{1}{K-C}\sum_{k=C}^{K-1} \E\|g_k\|_1  \leq \frac{\frac{f_C - f_*}{\delta}  + (2\kappa+\|\vec L\|_1\delta/2)(\log K+1) }{\sqrt{K} - \frac{C}{\sqrt{K}}}. $$
\end{lemma}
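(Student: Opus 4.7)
The plan is to follow the blueprint of the proof of Theorem~\ref{thm:signSGD} but with the stochastic gradient $\tilde{g}_k$ replaced by the arbitrary measurable surrogate $v_k$, and with the sign-error probability abstracted into the hypothesis~\eqref{eq:masterbound_new}. First I would apply Assumption~\ref{a:coordinate_lip} to consecutive iterates with $x_{k+1}-x_k = -\delta_k\,\mathrm{sign}(v_k)$. Since each coordinate of the step has squared magnitude exactly one, the quadratic remainder evaluates cleanly to $\tfrac{\delta_k^2}{2}\|\vec L\|_1$, and the linear term decomposes via the identity
\begin{equation*}
-g_k^T\mathrm{sign}(v_k) \;=\; -\|g_k\|_1 + 2\sum_{i=1}^d |g_{k,i}|\,\mathbb{I}\!\left[\mathrm{sign}(v_{k,i})\neq\mathrm{sign}(g_{k,i})\right],
\end{equation*}
which is the same trick used in Theorem~\ref{thm:signSGD}.

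Next I would take the total expectation over all randomness. The tower property converts the indicator's expectation into the conditional probability that appears in~\eqref{eq:masterbound_new}, so for each $k \in \{C,\dots,K-1\}$ we obtain $\E[f_{k+1}-f_k] \leq -\delta_k\,\E\|g_k\|_1 + 2\delta_k\,\xi(k) + \tfrac{\delta_k^2}{2}\|\vec L\|_1$. Telescoping from $k=C$ to $k=K-1$ and using $\E f_K \geq f_*$ (Assumption~\ref{a:lower}) gives
\begin{equation*}
f_C - f_* \;\geq\; \sum_{k=C}^{K-1}\delta_k\,\E\|g_k\|_1 \;-\; 2\sum_{k=C}^{K-1}\delta_k\,\xi(k) \;-\; \sum_{k=C}^{K-1}\frac{\delta_k^2\|\vec L\|_1}{2}.
\end{equation*}
Lower-bounding the left-hand sum by $\min_{C\leq k\leq K-1}\delta_k$ times $\sum_{k=C}^{K-1}\E\|g_k\|_1$ and dividing through by $(K-C)\min_k \delta_k$ yields the first claimed bound.

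For the specialization $\delta_k = \delta/\sqrt{k}$ and $\xi(k) = \kappa/\sqrt{k}$, I would note that $\min_{C\leq k\leq K-1}\delta_k = \delta/\sqrt{K}$, so the denominator becomes $(K-C)\delta/\sqrt{K} = \delta(\sqrt{K} - C/\sqrt{K})$. The two cumulative-error sums collapse: $\sum_{k=C}^{K-1}\delta_k\,\xi(k) = \delta\kappa\sum_{k=C}^{K-1}\tfrac{1}{k}$ and $\sum_{k=C}^{K-1}\delta_k^2 = \delta^2\sum_{k=C}^{K-1}\tfrac{1}{k}$, both bounded by the standard harmonic inequality $\sum_{k=1}^{K}\tfrac{1}{k} \leq \log K + 1$. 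Substituting and simplifying produces the stated $(\log K + 1)$-rate.

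The proof is not technically delicate because hypothesis~\eqref{eq:masterbound_new} already packages all of the dependence of $v_k$ on the entire filtration into a single deterministic scalar $\xi(k)$; the measurability and square-integrability requirements are only needed so that the conditional probability is well-defined and Fubini applies when exchanging the conditional expectation with the total one. The real work is thus deferred to verifying~\eqref{eq:masterbound_new} for specific choices of $v_k$---the stochastic gradient (Theorem~\ref{thm:signSGD}), a momentum buffer (Theorem~\ref{thm:signum}), or a sum of signs from $M$ workers (Theorem~\ref{thm:majority})---where Markov, Cantelli, or Gauss-type inequalities do the heavy lifting. The only minor obstacle in the master lemma itself is being careful to start the telescoping at index $C$ rather than $0$, so that any warm-up phase during which~\eqref{eq:masterbound_new} may fail to hold is cleanly excluded from the bound.
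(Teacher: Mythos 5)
Your proposal is correct and follows essentially the same route as the paper's own proof: the same sign-error identity for $-g_k^T\mathrm{sign}(v_k)$, conditional expectation plus the hypothesis \eqref{eq:masterbound_new}, telescoping from $k=C$ with Assumption~\ref{a:lower}, division by $(K-C)\min_{C\leq k\leq K-1}\delta_k$, and the harmonic-sum bound for the $\delta_k=\delta/\sqrt{k}$, $\xi(k)=\kappa/\sqrt{k}$ specialization (where using $\delta/\sqrt{K}$ in place of the exact minimum, as you and the paper both do, only weakens the bound and is harmless).
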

\begin{proof}
	  Our general strategy will be to show that the expected objective improvement at each step will be good enough to guarantee a convergence rate in expectation. First let's bound the improvement of the objective during a single step of the algorithm for $k\geq C$, and then take expectation. Note that $\mathbb{I}\sq{.}$ is the indicator function, and $w_k[i]$ denotes the $i^{th}$ component of the vector $w_k$.
	
	  By Assumption~\ref{a:coordinate_lip}
	\begin{align*}
	f_{k+1}-f_k &\leq g_k^T (x_{k+1}-x_k) + \sum_{i=1}^d\frac{L_i}{2} (x_{k+1,i}-x_{k,i})^2 &\text{Assumption \ref{a:coordinate_lip}} \\
	&= - \delta_k g_k^T \text{sign}(v_k) + \delta_k^2\frac{\|\vec L\|_1}{2} &\text{by the update rule} \\
	&= - \delta_k \norm{g_k}_1 + 2 \delta_k \sum_{i=1}^d |g_k[i]|\, \mathbb{I}\sq{\text{sign}(v_k[i]) \neq \text{sign}(g_k[i])} + \delta_k^2\frac{\|\vec L\|_1}{2} & \text{by identity}
	\end{align*}
	
	Now, for $k\geq C$ we need to find the expected improvement at time $k+1$ conditioned on $x_k$, where the expectation is over the randomness of the stochastic gradient oracle. Note that $\mathbb{P}\sq{E}$ denotes the probability of event $E$.
		\begin{align*}
	\mathbb{E}\sq{f_{k+1}-f_k|x_k} &\leq - \delta_k \norm{g_k}_1 + 2 \delta_k \sum_{i=1}^d |g_k[i]|\, \mathbb{P}\sq[\Big]{\text{sign}(v_k[i]) \neq \text{sign}(g_k[i]) \Big| x_k} + \delta_k^2\frac{\|\vec L\|_1}{2}.
	\end{align*}
    Note that $g_k$ becomes fixed when we condition on $x_k$. Further take expectation over $x_k$, and apply \eqref{eq:masterbound_new}. We get:
    	\begin{equation}\label{eq:desent}
	\mathbb{E}\sq{f_{k+1}-f_k}  \leq -\delta_k \E[\norm{g_k}_1] + 2\delta_k \xi(k) + \frac{\delta_k^2 \|\vec L\|_1}{2}.
	\end{equation}


	Rearrange the terms and sum over \eqref{eq:desent} for $k=C,C+1,...,K-1$.
	$$
\sum_{k=C}^{K-1} \delta_k \E[\|g_k\|_1]  \leq   \sum_{k=C}^{K-1}(\E f_{k} - \E f_{k+1})  +  2\sum_{k=C}^{K-1}\delta_k\xi(k)  + \sum_{k=C}^{K-1}\frac{\delta_k^2\|\vec L\|_1}{2} 
	$$
	
	Dividing both sides by $\big[(K-C)\min_{C\leq k\leq K-1} \delta_k \big]$, using a telescoping sum over $\E f_k$ and using that $f(x) \geq  f_*$ for all $x$, we get
	 $$\frac{1}{K-C}\sum_{k=C}^{K-1} \E\|g_k\|_1  \leq  \frac{f_C  - f_*  + 2\sum_{k=C}^{K-1}\delta_k \xi(k)  +  \sum_{k=C}^{K-1}\frac{\delta_k^2\|\vec L\|_1}{2}}{(K-C)\min_{C\leq k \leq K-1}\delta_k }$$
	 and the proof is complete by noting that the minimum is smaller than the average in the LHS.
\end{proof}

To use the above Lemma for analyzing \Signum{} and the Majority Voting scheme, it suffices to check condition \eqref{eq:masterbound_new} for each algorithm.

One possible way to establish \eqref{eq:masterbound_new} is show that $v_k$ is a good approximation of the gradient $g_k$ in expected absolute value.
\begin{lemma}[Estimation to testing reduction]
Equation \eqref{eq:masterbound_new} is true, if for every $k$
	\begin{equation} \label{eq:masterbound}
	\sum_{i=1}^d{\E |v_k[i]  - g_k[i]|} \leq \xi(k).
	\end{equation}
\end{lemma}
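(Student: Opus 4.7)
The plan is to reduce the testing-probability bound \eqref{eq:masterbound_new} to the estimation bound \eqref{eq:masterbound} through a single pointwise inequality. For any two real numbers $a,b$ with $\text{sign}(a) \neq \text{sign}(b)$, they lie on opposite sides of zero (with possibly one being zero), so $|a - b| \geq |b|$. Applying this coordinatewise to $v_k[i]$ and $g_k[i]$ yields the deterministic bound
\begin{equation*}
|g_k[i]| \, \mathbb{I}[\text{sign}(v_k[i]) \neq \text{sign}(g_k[i])] \leq |v_k[i] - g_k[i]|
\end{equation*}
for every $i$ and every realisation of the randomness.

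Next I would rewrite the left-hand side of \eqref{eq:masterbound_new} using the tower property. Since $g_k = \nabla f(x_k)$ is $x_k$-measurable, the factor $|g_k[i]|$ can be pulled out of the inner conditional expectation, and $\P[\,\cdot \mid x_k]$ recombines with the outer expectation to produce an unconditional expectation of the indicator. Concretely,
\begin{equation*}
\E\left[\sum_{i=1}^d |g_k[i]| \, \P[\text{sign}(v_k[i]) \neq \text{sign}(g_k[i]) \mid x_k]\right] = \sum_{i=1}^d \E\left[|g_k[i]| \, \mathbb{I}[\text{sign}(v_k[i]) \neq \text{sign}(g_k[i])]\right].
\end{equation*}

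Finally I would apply the pointwise bound inside each term on the right and invoke hypothesis \eqref{eq:masterbound}:
\begin{equation*}
\sum_{i=1}^d \E\left[|g_k[i]| \, \mathbb{I}[\text{sign}(v_k[i]) \neq \text{sign}(g_k[i])]\right] \leq \sum_{i=1}^d \E|v_k[i] - g_k[i]| \leq \xi(k),
\end{equation*}
which is exactly \eqref{eq:masterbound_new}. There is no substantive obstacle: the lemma is essentially a one-line consequence of the sign-mismatch inequality. The only point requiring a small amount of care is the measurability step that justifies extracting $|g_k[i]|$ from the conditional expectation, which is immediate because $g_k$ is a deterministic function of $x_k$ while $v_k$ is a measurable function of the history up to time $k$.
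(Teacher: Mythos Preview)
Your proof is correct and follows essentially the same idea as the paper's: both rest on the observation that a sign mismatch between $a$ and $b$ forces $|a-b|\geq|b|$. The paper phrases this as $\mathbb{P}[\text{sign}(a)\neq\text{sign}(b)]\leq\mathbb{P}[|a-b|>|b|]$ and then invokes Markov's inequality to cancel $|g_k[i]|$, whereas you apply the pointwise bound $|b|\,\mathbb{I}[\text{sign}(a)\neq\text{sign}(b)]\leq|a-b|$ directly before taking expectations; these are the same argument with the order of ``bound'' and ``take expectation'' swapped.
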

\begin{proof}
First note that for any two random variables $a,b \in \R$.
$$
\mathbb{P}\sq[\Big]{\text{sign}(a) \neq \text{sign}(b)}\leq \mathbb{P}\sq[\Big]{|a-b|>|b|} .
$$
Condition on $x_k$ and apply the above inequality to every $i=1,...,d$ to what is inside the expectation of \eqref{eq:masterbound_new}, we have
$$\sum_{i=1}^d |g_k[i]| \mathbb{P}\sq[\Big]{\text{sign}(v_k[i]) \neq \text{sign}(g_k[i]) \Big| x_k} \leq \sum_{i=1}^d |g_k[i]| \mathbb{P}\sq[\Big]{|v_k[i]-g_k[i]|> |g_k[i]| \Big| x_k} \leq \sum_{i=1}^d\E[|v_k[i]-g_k[i]| | x_k].$$
Note that the final $\leq$ uses Markov's inequality and constant $|g_k[i]|$ cancels out.
 
The proof is complete by taking expectation on both sides and apply \eqref{eq:masterbound}.
\end{proof}
Note that the proof of this lemma uses Markov's inequality in the same way information-theoretical lower bounds are often proved in statistics --- reducing estimation to testing.

Another handy feature of the result is that we do not require the approximation to hold for every possible $x_k$. It is okay that for some $x_k$, the approximation is much worse as long as those $x_k$ appears with small probability according to the algorithm. This feature enables us to analyze momentum and hence proving the convergence for \Signum.

\section{Analysis for \Signum}

Recall our definition of the key random variables used in \Signum.
\begin{align*}
g_k &:= \nabla f(x_k)\\
\tilde{g}_k &:= \frac{1}{n_k}\sum_{j=1}^{n_k} \tilde{g}^{(j)}(x_k)\\
m_k &:= \frac{1-\beta}{1-\beta^{k+1}}\sum_{t=0}^k \sq*{\beta^t g_{k-t}}\\
\tilde{m}_k &:= \frac{1-\beta}{1-\beta^{k+1}}\sum_{t=0}^k \sq*{\beta^t \tilde{g}_{k-t}}\\
\end{align*}
\Signum{} effectively uses $v_k = \tilde{m}_k$ and also $\delta_k  = O(1/\sqrt{k})$. 

Before we prove the convergence of \Signum, we first prove a utility lemma about the random variable $Z_k   := \tilde{g}_k  -  g_k $. Note that in this lemma quantities like $Z_k$, $Y_k$, $\abs{Z_k}$ and $Z_k^2$ are considered vectors---so this lemma is a statement about each component of the vectors separately and all operations, such as $(\cdot)^2$ are pointwise operations.

\begin{lemma}[Cumulative error of stochastic gradient]\label{lem:martingale}
	For any $k<\infty$ and fixed weight $-\infty < \alpha_1,...,\alpha_k< \infty$, $\sum_{l=1}^k  \alpha_l  Z_l$ is a Martingale. In particular,
	$$
	\E\sq*{\br*{\sum_{l=1}^k  \alpha_l  Z_l}^2} \leq  \sum_{l=1}^k \alpha_l^2 {\vec \sigma}^2.
	$$
\end{lemma}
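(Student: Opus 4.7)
The plan is to recognize $\{Z_l\}$ as a coordinate-wise martingale difference sequence with respect to the natural filtration generated by the stochastic gradient oracle, and then invoke orthogonality of martingale increments. I would take all operations (squaring, the inequality, $\vec\sigma^2$) to be componentwise, so that it suffices to prove the scalar version for a single coordinate $i$ and conclude by applying the argument coordinate by coordinate.

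First I would define a filtration $\mathcal{F}_l := \sigma\bigl(\tilde g^{(j)}(x_t) : t < l,\ 1 \leq j \leq n_t\bigr)$, with respect to which $x_l$ is measurable (since $x_l$ is a deterministic function of past oracle calls) and the $\alpha_l$ are trivially measurable (they are fixed). By Assumption~\ref{a:hoeffding}, each oracle call at query $x_l$ is an independent unbiased estimate of $g(x_l)$, so
\begin{equation*}
\E[\tilde g_l \mid \mathcal{F}_l] = g(x_l) = g_l, \qquad \E[Z_l \mid \mathcal{F}_l] = 0.
\end{equation*}
Hence each $\alpha_l Z_l$ is a martingale difference and the partial sums $S_k := \sum_{l=1}^k \alpha_l Z_l$ form a martingale, proving the first claim.

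For the second moment bound, I would expand the square (componentwise) and split into diagonal and cross terms:
\begin{equation*}
\E[S_k^2] = \sum_{l=1}^k \alpha_l^2\, \E[Z_l^2] + 2\sum_{1\leq l < m \leq k} \alpha_l \alpha_m\, \E[Z_l Z_m].
\end{equation*}
The cross terms vanish by the tower property: for $l < m$, the random variables $Z_l$ and $\alpha_l,\alpha_m$ are $\mathcal{F}_m$-measurable, so
\begin{equation*}
\E[Z_l Z_m] = \E\bigl[Z_l\, \E[Z_m \mid \mathcal{F}_m]\bigr] = 0.
\end{equation*}
For the diagonal terms, Assumption~\ref{a:hoeffding} gives $\E[Z_l^2 \mid \mathcal{F}_l] \leq \vec\sigma^2/n_l \leq \vec\sigma^2$ componentwise (using that $\tilde g_l$ is an average of $n_l$ independent oracle calls whose variances sum). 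Taking expectations yields the claimed bound.

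The main obstacle is not technical but bookkeeping: one must set up the filtration so that $x_l$—which is itself random and depends on all prior stochastic gradients—is $\mathcal{F}_l$-measurable, so that Assumption~\ref{a:hoeffding}'s unbiasedness translates into a genuine conditional mean-zero property. Once that is in place, the rest is standard martingale orthogonality applied coordinate by coordinate.
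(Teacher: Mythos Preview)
Your proposal is correct and follows essentially the same route as the paper: recognize $\{\alpha_l Z_l\}$ as a martingale difference sequence, expand the square, kill the cross terms by the tower property, and bound the diagonal terms by Assumption~\ref{a:hoeffding}. Your explicit filtration $\mathcal{F}_l$ is a slightly cleaner packaging than the paper's conditioning on $Y_1,\dots,Y_k$ and then on $x_l$ via the law of total probability, but the substance is identical; the only minor omission is that you do not separately verify $\E|Y_k|<\infty$, though this follows immediately from the square-integrability you already use.
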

\begin{proof}
	We simply check the definition of a Martingale. Denote $Y_k:= \sum_{l=1}^k\alpha_l Z_l$. 
	First, we have that
	\begin{align*}
	\E[|Y_k|] &= \E\sq*{\abs*{\sum_{l=1}^k\alpha_l Z_l }} \\  
    &\leq \sum_l  |\alpha_l|\E[|Z_l|] &\text{triangle inequality}\\
    &= \sum_l  |\alpha_l|\E\sq[\Big]{\E\sq*{|Z_l| | x_l}} &\text{law of total probability}\\
   &\leq \sum_l  |\alpha_l|\E\sq*{\sqrt{\E[ Z_l^2  |x_l]} } &\text{Jensen's inequality}\\
   &\leq \sum_l |\alpha_l| \vec \sigma  <\infty 
	\end{align*}
	Second, again using the law of total probability,
	\begin{align*}
	\E[Y_{k+1} | Y_1,...,Y_k]  &= \E\left[\sum_{l=1}^{k+1}\alpha_l Z_l  \middle|  \alpha_1 Z_1, ...,  \alpha_k Z_k \right]  \\
	&=  Y_k  +  \alpha_{k+1}\E\left[ Z_{k+1}  \middle|  \alpha_1 Z_1, ...,  \alpha_k Z_k  \right]\\
	&= Y_k  +   \alpha_{k+1}\E\left[  \E\left[ Z_{k+1}  \middle|  x_{k+1},  \alpha_1 Z_1, ...,  \alpha_k Z_k  \right] | \alpha_1 Z_1, ...,  \alpha_k Z_k \right]\\
	&= Y_k  + \alpha_{k+1}\E\left[  \E\left[ Z_{k+1}  \middle|  x_{k+1}\right] | \alpha_1 Z_1, ...,  \alpha_k Z_k \right]\\
	&=Y_k
	\end{align*}
	This completes the proof that it is indeed a Martingale. We now make use of the properties of Martingale difference sequences to establish a variance bound on the Martingale.
	\begin{align*}
	\E\sq*{\br*{\sum_{l=1}^k \alpha_l Z_l }^2 }  &=  \sum_{l=1}^k \E[  \alpha_l^2 Z_l^2]   +  2\sum_{l<j} \E[\alpha_l\alpha_j Z_l Z_j]\\
	&=\sum_{l=1}^k \alpha_l^2  \E[\E[Z_l^2| Z_1,...,Z_{l-1} ]]   +   2\sum_{l<j} \alpha_l\alpha_j  \E\Big[Z_l \E\big[ \E[Z_j | Z_1,...,Z_{j-1}] \big| Z_l \big]\Big]\\
    &=\sum_{l=1}^k \alpha_l^2  \E[\E[\E[Z_l^2| x_l,Z_1,...,Z_{l-1} ]| Z_1,...,Z_{l-1} ]]   +   0\\
	&=\sum_{l=1}^k \alpha_l^2 \vec \sigma^2.
	\end{align*}
\end{proof}
The consequence of this lemma is that we are able to treat $Z_1,...,Z_k$ as if they are  independent, even though they are not---clearly $Z_l$ is dependent on $Z_1,...,Z_{l-1}$ through $x_l$.\\

\begin{lemma}[Gradient approximation in \Signum{}]\label{lem:gradient_approx}
	The version of the \Signum{} algorithm that takes $v_k = \tilde{m}_k$, and all parameters according to Theorem~\ref{thm:signum}, obeys that for all integer  $C\leq k\leq K$
	$$
	\left\|\E |\tilde{m}_k  - g_k|\right\|_1 \leq \frac{2}{\sqrt{k+1}} \left(8 \|\vec L\|_1 \delta \frac{\beta}{1-\beta}  + \sqrt{3}\|\vec \sigma\|_1\sqrt{1-\beta}\right).
	$$
\end{lemma}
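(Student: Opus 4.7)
The plan is a bias--variance decomposition: I would write $\tilde m_k - g_k = (\tilde m_k - m_k) + (m_k - g_k)$, apply the triangle inequality componentwise inside the expectation, and sum, reducing the lemma to bounding $\|\E|\tilde m_k - m_k|\|_1$ (pure stochastic term) and $\|\E|m_k - g_k|\|_1$ (pure staleness bias) separately.

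\textbf{Variance piece.} I would expand
\[
\tilde m_k - m_k = \frac{1-\beta}{1-\beta^{k+1}}\sum_{t=0}^k \beta^t Z_{k-t}, \qquad Z_l := \tilde g_l - g_l,
\]
and apply Jensen coordinatewise to get $\E|(\tilde m_k - m_k)_i| \leq \sqrt{\E[(\tilde m_k - m_k)_i^2]}$. Then Lemma~\ref{lem:martingale} with $\alpha_l = \beta^{k-l}$, combined with the per-batch variance $\sigma_i^2/n_l = \sigma_i^2/(l+1)$ from the schedule $n_l = l+1$, bounds the second moment by $\frac{(1-\beta)^2}{(1-\beta^{k+1})^2}\sigma_i^2 \sum_{t=0}^k \frac{\beta^{2t}}{k-t+1}$. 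I would control the remaining deterministic sum by splitting at $t = k/2$: the head gives $\frac{2}{k+1}\sum_t \beta^{2t} \leq \frac{2}{(k+1)(1-\beta^2)}$, while the tail is absorbed using $\beta^{2t} \leq \beta^k$ together with the warmup condition $\beta^{C+1}\leq 1/2$ (which also yields $1-\beta^{k+1}\geq 1/2$). Summing over $i$ and simplifying $\sqrt{(1-\beta)^2/(1-\beta^2)} \leq \sqrt{1-\beta}$ recovers the $\frac{2\sqrt{3}}{\sqrt{k+1}}\|\vec\sigma\|_1\sqrt{1-\beta}$ piece of the target.

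\textbf{Bias piece.} Because the momentum weights sum to one, I would rewrite
\[
m_k - g_k = \frac{1-\beta}{1-\beta^{k+1}}\sum_{t=0}^k \beta^t(g_{k-t}-g_k).
\]
The key step is a componentwise smoothness inequality $\sum_i |g_{k-t,i}-g_{k,i}| \leq \|\vec L\|_1 \|x_k - x_{k-t}\|_\infty$, which I would extract from Assumption~\ref{a:coordinate_lip} applied along the piecewise-linear path through the iterates, together with the fact that every Signum step satisfies $\|x_{j+1}-x_j\|_\infty = \delta_j$ exactly. Telescoping then gives $\|x_k - x_{k-t}\|_\infty \leq \sum_{s=1}^t \delta_{k-s} \leq 2\delta\,t/\sqrt{k-t+1}$ after substituting $\delta_j = \delta/\sqrt{j+1}$. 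Substituting back I am left with the scalar sum $\sum_{t=0}^k \beta^t \cdot t/\sqrt{k-t+1}$, which I would bound by the same head--tail split: for $t\leq k/2$ pull $1/\sqrt{k+1}$ out of the front, leaving $\sum_t t\beta^t = \beta/(1-\beta)^2$; for $t>k/2$ the factor $\beta^t$ kills the sum, using precisely the warmup condition $\frac{C}{2}\beta^C \leq \frac{1}{(1-\beta^2)(C+1)}$. Multiplying through by $(1-\beta)/(1-\beta^{k+1})\leq 2(1-\beta)$ collapses everything into $\frac{16}{\sqrt{k+1}}\|\vec L\|_1\delta\,\beta/(1-\beta)$, matching the target.

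\textbf{Main obstacle.} The real work is in the bias piece. First, turning the \emph{function-value} inequality of Assumption~\ref{a:coordinate_lip} into a clean componentwise bound $\sum_i |g_{k-t,i}-g_{k,i}| \leq \|\vec L\|_1 \|x_k-x_{k-t}\|_\infty$ with the \emph{correct} $\|\vec L\|_1$ (rather than the looser $\|\vec L\|_\infty d$ or cross terms $\sqrt{L_iL_j}$ that a naive Hessian argument produces) requires care; this is the step where the $\ell_1/\ell_\infty$ geometry of signSGD shows up. Second, the sum $\sum_t \beta^t\,t/\sqrt{k-t+1}$ must be split and estimated so the final constants come out compatible with the stated bound. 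The two warmup conditions on $C(\beta)$ in Theorem~\ref{thm:signum} are engineered for exactly this split: $\beta^{C+1}\leq 1/2$ controls the normalization $(1-\beta^{k+1})^{-1}$, and $\frac{C}{2}\beta^C \leq \frac{1}{(1-\beta^2)(C+1)}$ is calibrated to bound the tail of the bias sum by what the head already contributes, which is why they appear in the theorem statement rather than being hidden.
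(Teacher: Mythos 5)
Your skeleton is the same as the paper's: the componentwise bias--variance split $\E|\tilde m_k-g_k|\le\E|m_k-g_k|+\E|\tilde m_k-m_k|$, Jensen plus the martingale variance lemma with weights $\beta^t$ and per-batch variance $\sigma_i^2/(k-t+1)$ for the stochastic piece, and a smoothness-based drift bound combined with $\sum_t t\beta^t=\beta/(1-\beta)^2$ for the bias. Two points, however, do not close as written. First, your handling of the bias sum differs from the paper's in a way that creates a gap: the paper bounds the drift by $\|g_{k-t}-g_k\|_1\le 2\|\vec L\|_1\sum_{l=0}^{t-1}\delta_{k-l-1}\le 2\|\vec L\|_1\delta\int_{k-t}^{k}x^{-1/2}dx\le 4\|\vec L\|_1\delta\,t/\sqrt{k}$, with denominator $\sqrt{k}$ uniformly in $t$, so the full series $\sum_t t\beta^t$ converges outright and \emph{no} head--tail split and no warmup condition is needed in the bias term at all. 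Your cruder bound $t\delta/\sqrt{k-t+1}$ forces a split whose tail is of order $k^{3/2}\beta^{k/2}$, and the stated warmup condition $\tfrac{C}{2}\beta^{C}\le\tfrac{1}{(1-\beta^2)(C+1)}$ controls $\beta^{C}$, not $\beta^{C/2}$, so it does not obviously absorb that tail with the constant you claim; the fix is simply to use the integral bound. Relatedly, you have swapped the roles of the two warmup conditions: in the paper the $\tfrac{C}{2}\beta^{C}$ condition is what absorbs the tail $\tfrac{k}{2}\beta^{k}\vec\sigma^2$ of the \emph{variance} sum, while $\beta^{C+1}\le\tfrac12$ serves only to bound the normalisation $\tfrac{1-\beta}{1-\beta^{k+1}}\le 2(1-\beta)$ in both pieces.

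Second, your key inequality $\sum_i|g_{k-t,i}-g_{k,i}|\le\|\vec L\|_1\|x_k-x_{k-t}\|_\infty$ is asserted rather than proved, and it is sharper than what the paper establishes: the paper's inner lemma only gives $\|g(x+\epsilon s)-g(x)\|_1\le 2\epsilon\|\vec L\|_1$ for sign directions $s$, proved by writing the integrated Hessian as $H_+-H_-$ with $0\preceq H_\pm\prec\mathrm{diag}(\vec L)$ and using $v^TH_\pm v\le\|\vec L\|_1$ for sign vectors $v$, and that factor-2 version is what produces the constant $16$ in the bound $\tfrac{16\|\vec L\|_1\delta}{\sqrt{k+1}}\tfrac{\beta}{1-\beta}$. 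Your factor-free version is in fact true (maximise $v^TH u$ over the cube and decompose $v\pm s$ onto disjoint supports), but since you rely on it to make your constants come out, you must either supply that argument or fall back on the paper's factor-2 lemma and re-track the constants. With these two repairs your plan coincides with the paper's proof.
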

\begin{proof}
For each $i\in[d]$ we will use the following non-standard ``bias-variance'' decomposition.
  \begin{equation}\label{eq:bias-var-decomp}
\E\left[ |\tilde{m}_k[i] - g_k[i]|  \right] \leq  \underbrace{\E\left[ |m_k[i] - g_k[i]|  \right]}_{(*)} +  \underbrace{\E\left[ |\tilde{m}_k[i] - m_k[i]|   \right]}_{(**)}
\end{equation}
We will first bound $(**)$ and then deal with $(*)$.

Note that  $(**)  = \frac{1-\beta}{1-\beta^{k+1}}\E\left[ |\sum_{t=0}^{k} \beta^{k-t}  Z_t|   \right]$.
Using Jensen's inequality and applying Lemma~\ref{lem:martingale} with our choice of $\alpha_1,...,\alpha_l$ (including the effect of the increasing batch size) we get that for $k \geq C$
$$
(**)  \leq \frac{1-\beta}{1-\beta^{k+1}}\sqrt{\E \sq*{\abs*{\sum_{t=0}^{k} \beta^{t} Z_{k-t}}^2 }}  \leq \frac{1-\beta}{1-\beta^{k+1}}\sqrt{  \sum_{t=0}^k \sq*{\beta^{2t} \frac{\vec \sigma^2}{k-t+1}} },
$$
where
\begin{align*}
\sum_{t=0}^k \sq*{\beta^{2t} \frac{\vec \sigma^2}{k-t+1}}  &= 
\sum_{t=0}^{\frac{k}{2}}\sq*{\beta^{2t} \frac{\vec\sigma^2}{k-t+1}}  + \sum_{t=\frac{k}{2}+1}^k \sq*{\beta^{2t} \frac{\vec\sigma^2}{k-t+1}} & \text{break up sum}\\
&\leq 
\sum_{t=0}^{\frac{k}{2}}\sq*{\beta^{2t}} \frac{\vec\sigma^2}{\frac{k}{2}+1} + \sum_{t=\frac{k}{2}+1}^k \sq*{\beta^k \vec\sigma^2} &\text{bound summands}\\
&\leq \frac{1}{1-\beta^2}\frac{\vec\sigma^2}{\frac{k}{2}+1} + \frac{k}{2}\beta^{k}\vec\sigma^2 &\text{geometric series} \\
&\leq \frac{3}{1-\beta^2}\frac{\vec\sigma^2}{k + 1} & \text{since }k \geq C
\end{align*}

Combining, and again using our condition that $k \geq C$, we get
\begin{equation} \label{eq:std_bound}
(**) \leq \frac{1-\beta}{1-\beta^{k+1}}\sqrt{\frac{3}{1-\beta^2}}\frac{\vec\sigma}{\sqrt{k+1}} \leq 2\sqrt{3}\sqrt{1-\beta}\frac{\vec\sigma}{\sqrt{k+1}}
\end{equation}

We now turn to bounding $(*)$ --- the ``bias'' term.
\begin{align}
\mathbb{E}\sq*{|m_k - g_k|} &= \mathbb{E}\sq*{\left|\frac{1-\beta}{1-\beta^{k+1}}\sum_{t=0}^k \sq*{\beta^t g_{k-t}} - \frac{1-\beta}{1-\beta^{k+1}}\sum_{t=0}^k \sq*{ \beta^t g_k}\right|} & \text{since } \frac{1-\beta}{1-\beta^{k+1}}\sum_{t=0}^k \beta^t = 1\nonumber\\
&\leq \frac{1-\beta}{1-\beta^{k+1}}\sum_{t=0}^k \sq*{\beta^t \mathbb{E}\sq*{|g_{k-t} -g_{k}|}} & \nonumber\\
&\leq 2(1-\beta)\sum_{t=1}^k \beta^t \mathbb{E}\sq*{|g_{k-t} - g_{k}|} &\text{since } k\geq C \label{eq:bound_bias_interm1}
\end{align}
To proceed, we need the following lemma.
\begin{lemma}
Under Assumption~\ref{a:coordinate_lip}, for any sign vector $s\in \{-1,1\}^d$, any $x\in\R^d$ and any $\epsilon \leq \delta$
$$
\| g(x+\epsilon s) - g(x)\|_1 \leq 2\epsilon\|\vec L\|_1.
$$
\end{lemma}
\begin{proof}
By Taylor's theorem, 
$$
g(x+\epsilon s) - g(x)  = \left[\int_{t=0}^{1} H(x + t\epsilon s) dt \right]  \epsilon s. 
$$
Let $v := \text{sign}(g(x+\epsilon s) - g(x) )$, $H := \left[\int_{t=0}^{1} H(x + t\epsilon s) dt  \right]$
and moreover, use $H_+$ to denote the psd part of $H$ and $H_-$ to denote the nsd part of $H$. Namely, $H = H_+ - H_-$. 

We can write
\begin{align}
\|g(x+\epsilon s) - g(x)\|_1 &=  v^T (g(x+\epsilon s) - g(x))  = v^T H (\epsilon s) = \epsilon v^T H_+ s - \epsilon v^TH_- s \nonumber\\
&= \epsilon \langle H_+^{1/2}v, H_+^{1/2}s \rangle -  \epsilon\langle H_-^{1/2}v, H_-^{1/2}s\rangle \leq \epsilon \|H_+^{1/2}v\| \|H_+^{1/2} s\| + \epsilon\|H_-^{1/2}v\|\|H_-^{1/2}s\|. \label{eq:pf_lemma_gradientl1}
\end{align}

Note that assumption~\ref{a:coordinate_lip} implies the semidefinite ordering
$$H_+\prec \diag(\vec L) \text{ and } H_- \prec \diag(\vec L) $$
and thus
$\max\{s^T H_+s,s^TH_-s\} \leq \sum_{i=1}^d L_i = \|\vec L\|_1$ for all $s\in\{-1,1\}^d$.

The proof is complete by observing that both $v$ and $s$ are sign vectors in \eqref{eq:pf_lemma_gradientl1}.
\end{proof}


Using the above lemma and the fact that our update rules are always following some sign vectors with learning rate smaller than $\delta$, we have 
\begin{align}
\|g_{k-t} - g_{k}\|_1 &\leq  \sum_{l=0}^{t-1} \| g_{k-l} - g_{k-l-1}\|_1  \nonumber\\
&\leq   2\|\vec L\|_1\sum_{l=0}^{t-1}\delta_{k-l-1}  \leq   \sum_{l=0}^{t-1}\frac{2\|\vec L\|_1 \delta}{\sqrt{k-l}} \nonumber\\
&\leq 2\|\vec L\|_1 \delta \int_{k-t}^k \frac{dx}{\sqrt{x}}  = 4\|\vec L\|_1 \delta \br*{\sqrt{k} - \sqrt{k-t}}\nonumber\\
&\leq 4\|\vec L\|_1 \delta\sqrt{k} \br*{1 - \sqrt{1-\frac{t}{k}}}\nonumber\\
&\leq  4\|\vec L\|_1 \delta \frac{t}{\sqrt{k}} & \text{for }x\geq0\text{, } 1 - x \leq \sqrt{1 - x} \label{eq:bound_bias_interm11}
\end{align}

It follows that
\begin{align}
\sum_i \E[|m_k[i]-g_k[i]|] &= \E\left[ \|m_k - g_k\|_1 \right]&\nonumber\\
 &\leq   2(1-\beta)\sum_{t=1}^k \beta^t \mathbb{E}\|g_{k-t}-g_k\|_1 &\text{Apply \eqref{eq:bound_bias_interm1}}\nonumber\\
&\leq \frac{8(1-\beta)\|\vec L\|_1 \delta}{\sqrt{k}}  \sum_{t=1}^\infty t\beta^t & \text{Apply \eqref{eq:bound_bias_interm11} and extend sum to } \infty\nonumber\\
&\leq\frac{8(1-\beta)\|\vec L\|_1 \delta}{\sqrt{k}} \frac{\beta}{(1-\beta)^2}&\text{derivative of geometric progression}\nonumber\\
&\leq \frac{16\|\vec L\|_1\delta}{\sqrt{k+1}}\frac{\beta}{1-\beta}  & \text{for $k\geq1$, $\sqrt{\frac{k+1}{k}}\leq2$} \label{eq:bias_bound}
\end{align}


Substitute \eqref{eq:std_bound} into \eqref{eq:bias-var-decomp}, sum both sides over $i$ and then further plug in \eqref{eq:bias_bound} we get the statement in the lemma.
\end{proof}

The proof of Theorem~\ref{thm:signum} now follows in a straightforward manner. Note that Lemma \ref{lem:gradient_approx} only kicks in after a warmup period of $C$ iterations, with $C$ as specified in Theorem \ref{thm:signum}. In theory it does not matter what you do during this warmup period, provided you accumulate the momentum as normal and take steps according to the prescribed learning rate and mini-batch schedules. One option is to just stay put and not update the parameter vector for the first $C$ iterations. This is wasteful since no progress will be made on the objective. A better option in practice is to take steps using the sign of the stochastic gradient (i.e. do \signSGD) instead of \Signum{} during the warmup period.\\

\begin{proof}[Proof of Theorem~\ref{thm:signum}]

Substitute Lemma~\ref{lem:gradient_approx} as $\xi(k)$ into Lemma~\ref{lem:master} and check that 
$
\xi(k)= O(1/\sqrt{k}), \delta_k=O(1/\sqrt{k}), \min \delta_k = \delta/\sqrt{K}, C\ll K,
$
and in addition, we note that by the increasing minibatch size $N_K = O(K^2)$. Substitute $K =O(\sqrt{N})$ and take the square on both sides of the inequality. (We can take the $\min_{k}$ out of the square since all the arguments are nonnegative and $(\cdot)^2$ is monotonic on $\R_+$).
\end{proof}



\end{document}